\documentclass{article} 
\usepackage{iclr2025_conference,times}

\usepackage{graphicx}
\usepackage{float}
\usepackage{subcaption}
\usepackage{booktabs}
\usepackage{multirow}
\usepackage{algorithm}
\usepackage{algorithmic}
\usepackage{hyperref}
\usepackage{url}
\usepackage{xcolor}
\usepackage{amsmath}
\usepackage{amssymb}
\usepackage{amsfonts}
\usepackage{mathtools}
\usepackage{amsthm}
\usepackage{bm}
\usepackage{nicefrac}
\usepackage{ragged2e}
\usepackage[capitalize,noabbrev]{cleveref}

\newcommand{\rkeep}{\rho_{\mathrm{keep}}}
\newcommand{\eh}{\chi_{\phi}}

\theoremstyle{plain}
\newtheorem{theorem}{Theorem}[section]
\newtheorem{proposition}[theorem]{Proposition}

\newtheorem{corollary}[theorem]{Corollary}
\newtheorem{insight}[theorem]{Insight}
\theoremstyle{definition}
\newtheorem{definition}[theorem]{Definition}

\theoremstyle{remark}
\newtheorem{remark}[theorem]{Remark}

\title{Nonlinearity-Aware LoRA: Structured Gate Adaptation under Low-Rank Constraints}

\author{\textbf{Shuai Yuan$^{1}$, Sudong Cai$^{2}\thanks{Corresponding author: Sudong Cai.}$ , Bingzhi Chen$^{1}$, 
Shuyuan Zheng$^{3}$ , Chuan Xiao$^{3}$}\\
\textbf{Makoto Onizuka$^{3}$ , Rui Mao$^{4}$}\\
$^{1}$Beijing Institute of Technology, Zhuhai ;
$^{2}$The Hong Kong Polytechnic University;\\
$^{3}$The University of Osaka;
$^{4}$Shenzhen University\\
\texttt{yuanshuai@bit.edu.cn; sudong.cai@polyu.edu.hk}
}

\iclrfinalcopy 
\begin{document}

\maketitle

\begin{abstract}
Low-rank adaptation (LoRA) is commonly justified as an update-space approximation to full fine-tuning, yet this view is incomplete for self-gated Transformer feed-forward networks (FFNs).  Drawing on the decision-making interpretation of activation as selective feature recalibration, we show that a low-rank residual can change not only the projected features but also the nonlinear \emph{selection weights} that determine which channels contribute to the FFN output.  We formalize this effect as \emph{selection misalignment} and connect it to the local \emph{effective homogeneity} of self-gated activations: the output behavior of a channel is governed by the gate pre-activation through a data-dependent homogeneity profile.  This motivates an effective-homogeneity-aware principle for PEFT: low-rank updates should spend capacity on gate channels whose nonlinear states remain responsive and should shape, rather than suppress, the temporal evolution of selection.  We propose \textbf{NA-LoRA} (Nonlinearity-Aware LoRA), a training-only method that implements this principle with two lightweight mechanisms: a derivative-based temporal-importance mask for gate-related LoRA updates, and an activation-specific step-scaling rule when a meaningful coarse effective-homogeneity partition is available.  NA-LoRA adds no auxiliary loss and incurs no inference-time overhead.  Across language-model fine-tuning and vision-language transfer benchmarks, NA-LoRA improves over vanilla LoRA and is competitive with or better than strong PEFT variants across task metrics, often approaching or exceeding full fine-tuning in the reported settings.
\end{abstract}

\section{Introduction}
\label{sec:intro}

Low-Rank Adaptation (LoRA)~\cite{hu2022lora} has become a standard parameter-efficient fine-tuning (PEFT) method for large language models.  By freezing the pretrained backbone and learning a low-rank update, LoRA reduces memory and optimization cost while retaining much of the flexibility of full fine-tuning.  Nevertheless, a persistent gap between LoRA and full fine-tuning remains in many settings~\cite{ding2023parameter}.  Recent variants reduce this gap through better rank allocation, initialization, update parameterization, or optimizer design~\cite{kalajdzievski2023rank,zhang2023adaptive,meng2024pissa,liu2024dora}.  Most of these methods share an implicit premise: if the low-rank update is well aligned with the full update in weight, gradient, or subspace geometry, then the adapted model should behave similarly.

This premise is useful but incomplete for Transformer feed-forward networks (FFNs), especially gated variants such as SwiGLU.  In a gated FFN, behavior is not determined solely by the linear projections.  The gate activation also converts pre-activation scores into multiplicative decision weights that reweight channel utilities.  Thus, a low-rank update can be close to a target update in parameter space while still inducing a different pattern of nonlinear channel selection.  This discrepancy is particularly important when small gate perturbations move channels across responsive, saturated, or positive non-amplifying regimes, because these regime changes alter how features are emphasized or suppressed by the FFN.

We revisit LoRA through the decision-making view of neural activation developed in the MCDM activation literature~\cite{caiIIEURethinkingNeural2023,caiAdaShiftLearningDiscriminative2024,caiPrincipledFlexibleScaling2025}.  In this view, an affine or linear projection provides an evaluation score, and the activation acts as a selective recalibrator that transforms the score into a decision weight.  Adapting this lens from activation design to PEFT leads to a different diagnosis of the LoRA gap: beyond approximating a full update, an adapter should account for the \emph{selection behavior} induced by the existing nonlinear gates.  We call the resulting failure mode \emph{selection misalignment}; Figure~\ref{fig:motivation} illustrates why a small weight-space residual can still change activation-induced channel selection.

Our analysis decomposes this misalignment into a view mismatch, a residual-through-view term, and a selection-mismatch term.  We further connect the selection term to the local \emph{effective homogeneity} of self-gated activations, which describes how a channel's nonlinear output changes under local score and input scaling.  This profile reveals whether gate changes are suppressive, responsive, or already in a positive non-amplifying state.  Importantly, these regions are not fixed engineering constants; they are induced by the activation-specific effective-homogeneity curve.  This suggests a design principle for low-rank adaptation: LoRA updates should not only fit gradients, but should also allocate capacity and update speed according to the nonlinear state of FFN gates.

We propose \textbf{NA-LoRA} (Nonlinearity-Aware LoRA), a training-only method that implements this principle without auxiliary alignment losses or inference-time overhead.  First, a \emph{temporal-importance channel mask} uses gate-derivative sensitivity to identify channels whose gate states are consistently responsive, allocating limited low-rank capacity to channels that can most affect selection behavior.  Second, a \emph{homogeneity-dynamics step scaling} rule uses activation-specific coarse regime statistics when they are non-degenerate, shaping update speed without unstructured amplification of the entire FFN adapter.  Attention projections remain standard LoRA; NA-LoRA applies nonlinearity-aware control only where FFN gates induce selection dynamics.

\noindent\textbf{Contributions.}
(1) We identify \emph{selection misalignment} as a behavior-space source of the LoRA gap in self-gated FFNs, complementing existing weight-, gradient-, and rank-centric analyses.
(2) We formalize a decision-making view of LoRA adaptation and derive effective-homogeneity diagnostics that connect gate sensitivity, channel utility, and structured temporal modulation.
(3) We introduce \textbf{NA-LoRA}, combining sensitivity-based temporal channel masking with activation-specific LoRA step scaling when a non-degenerate regime statistic is available.
(4) We validate NA-LoRA on language-model fine-tuning and CLIP transfer benchmarks, showing consistent gains over vanilla LoRA and competitive or improved performance relative to strong PEFT baselines with zero inference-time overhead.

\begin{figure}[t]
    \centering
    \includegraphics[width=\linewidth]{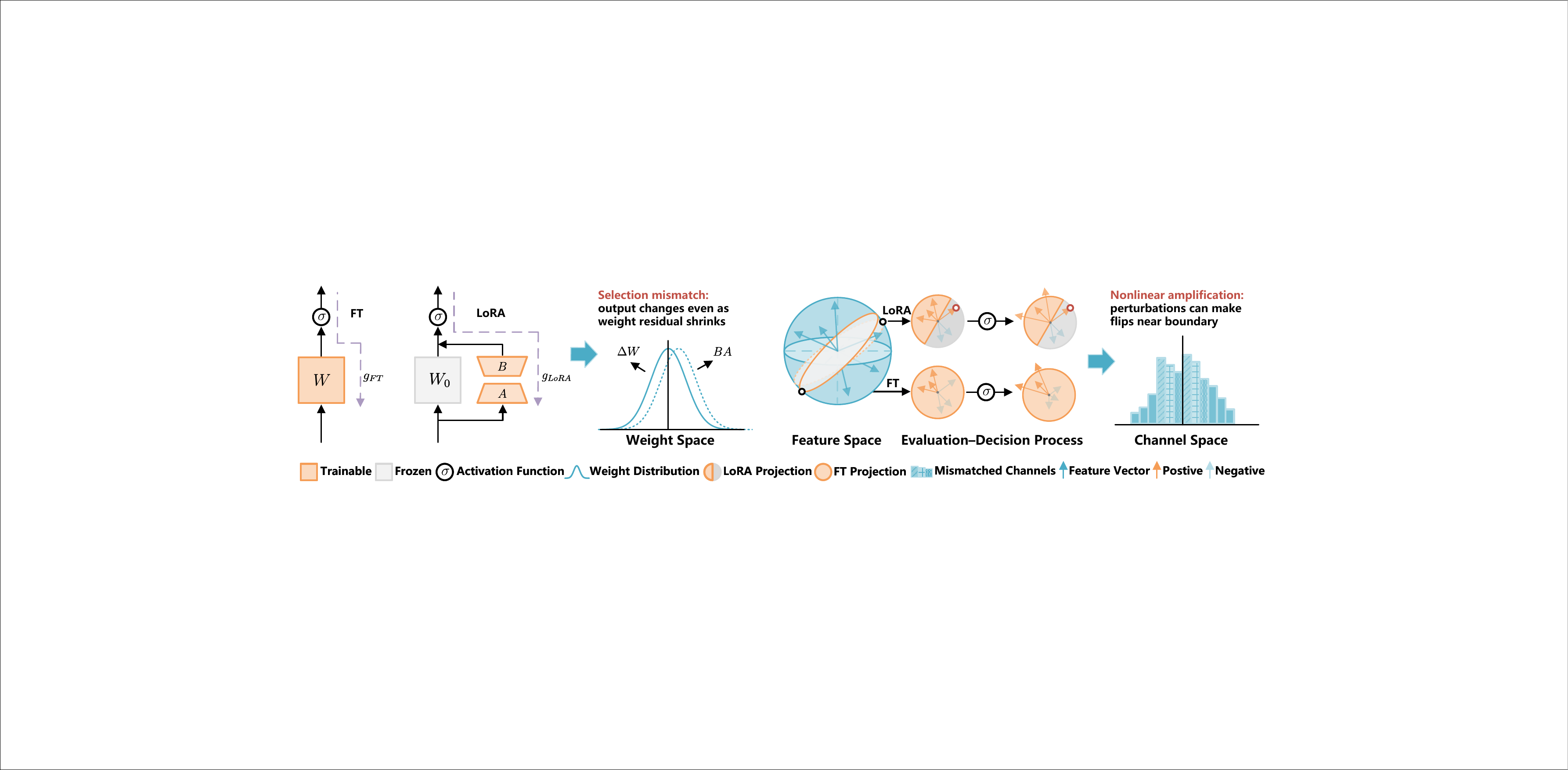}
    \caption{\textbf{Motivation.} Weight-space alignment does not guarantee activation-induced selection alignment in self-gated FFNs.  NA-LoRA targets the training-time gate dynamics that control channel usage.}
    \label{fig:motivation}
\end{figure}

\section{Related Work}
\label{sec:related}

\noindent\textbf{Low-rank adaptation.}
Parameter-efficient fine-tuning adapts pretrained models by updating a small set of trainable parameters, e.g., adapters, prompts, or low-rank factors~\cite{houlsby2019parameter,lester2021power,ding2023parameter,hu2022lora}.  LoRA~\cite{hu2022lora} is particularly widely used because it freezes the backbone and represents each update by a low-rank product.  Recent variants improve LoRA through rank allocation, initialization, update reparameterization, or gradient/optimization alignment~\cite{kalajdzievski2023rank,zhang2023adaptive,meng2024pissa,liu2024dora,wang2024loraga,wang2024lorapro,heGoRAGradientdrivenAdaptive2025}.  These methods mainly reason in weight, gradient, or subspace geometry; our focus is the nonlinear behavior induced by self-gated FFNs after such updates.

\noindent\textbf{Activation as selection.}
Self-gated activations such as SiLU/GELU transform pre-activation scores into input-dependent recalibration weights.  A related decision-making line interprets the affine--activation pipeline as feature evaluation followed by decision weighting, and uses this view to improve activation design or adaptive scaling~\cite{caiIIEURethinkingNeural2023,caiAdaShiftLearningDiscriminative2024,caiPrincipledFlexibleScaling2025}.  Recent nonlinear adapter work also suggests that adding nonlinear mappings to LoRA can improve PEFT expressivity~\cite{dong2025aurora}.  NA-LoRA differs in that it does not redesign the activation or add inference-time nonlinear modules; it uses the selection view to modulate how existing FFN gates evolve during LoRA training.

\noindent\textbf{Our position.}
Existing LoRA methods narrow the gap to full fine-tuning by improving the low-rank update itself.  We study a complementary failure mode: a low-rank update may be plausible in parameter space yet alter the gate-induced score-to-weight recalibration that determines channel usage.  NA-LoRA therefore treats FFN gates as output-determining selection units and applies training-only, history-aware modulation to their low-rank updates.

\section{Problem: Selection Misalignment in LoRA}
\label{sec:problem}

LoRA is usually analyzed as a low-rank approximation to a target update.  For a self-gated FFN, however, the output is determined jointly by the feature view, the gate score, and the activation-induced decision weight.  A small residual in a gate projection can therefore change the nonlinear rule by which channels are selected, even if the residual is small under a conventional matrix norm.  We call this behavior-space discrepancy \emph{selection misalignment}.

The target full-fine-tuning update used below is a conceptual reference, not an additional signal required by NA-LoRA.  It exposes why update-space alignment alone is insufficient and motivates training-time control that can be estimated from the observed gate dynamics of the LoRA model itself.

\subsection{Selection view of self-gated FFNs}
\label{subsec:selection_view}

Consider a SwiGLU-style FFN block
\begin{equation}
\label{eq:ffn_block}
u = W_u x,\qquad z = W_g x,\qquad h = \phi(z)\odot u,\qquad y=W_dh,
\end{equation}
where $u$ is the feature view and $z$ is the gate pre-activation.  We omit biases and normalization from the notation; the analysis applies locally after the normalized representation has been formed, and LoRA scaling factors are absorbed into the low-rank matrices.

\begin{definition}[Selection weights]
\label{def:selection_weights}
For a self-gated nonlinearity of the form $\phi(z)=\varrho(z)z$, define the channel-wise selection weight
\begin{equation}
\label{eq:selection_weight_def}
\phi(z)=s(z)\odot z,\qquad s_i(z)=\varrho(z_i)\ge 0 .
\end{equation}
At isolated points where the ratio form is ambiguous, $s$ is defined by continuity.  For SiLU, for example, $s_i(z)=\sigma(z_i)$.
\end{definition}

\begin{insight}[Activation as selective recalibration]
\label{ins:selection_recalibration}
Following the MCDM interpretation of activation, $z$ acts as an evaluation score, $s(z)$ acts as a decision-weighting rule, and $u$ provides the candidate feature view.  The FFN channel utility is therefore
\begin{equation}
\label{eq:utility_def}
\mathcal{U}(z,u):=\phi(z)\odot u = \big(s(z)\odot z\big)\odot u.
\end{equation}
Selection alignment means preserving this score-to-weight recalibration, not merely approximating the linear projection that produces $z$.
\end{insight}

\subsection{Weight-space fit need not imply selection alignment}
\label{subsec:selection_mismatch}

Let the target and LoRA-adapted gate projections be
\begin{equation}
W_{g,\mathrm{tar}}=W_{g,0}+\Delta W_g,
\qquad
W_{g,\mathrm{lora}}=W_{g,0}+\widehat{\Delta W}_g,
\qquad
\widehat{\Delta W}_g=B_gA_g,
\end{equation}
and define the gate residual $E_g:=\Delta W_g-\widehat{\Delta W}_g$.  Similarly, define the feature-view residual $E_u:=\Delta W_u-\widehat{\Delta W}_u$ for the up projection.  For an input $x$,
\begin{equation}
z_{\mathrm{tar}}=W_{g,\mathrm{tar}}x,
\qquad
z_{\mathrm{lora}}=W_{g,\mathrm{lora}}x,
\qquad
\Delta z=z_{\mathrm{tar}}-z_{\mathrm{lora}}=E_gx,
\end{equation}
and analogously $\Delta u=u_{\mathrm{tar}}-u_{\mathrm{lora}}=E_ux$.

\begin{proposition}[Exact decomposition of FFN mismatch]
\label{prop:exact_selection_decomp}
For a self-gated FFN with $\phi(z)=s(z)\odot z$, the channel-utility mismatch admits the exact decomposition
\begin{equation}
\label{eq:exact_selection_decomp}
\begin{split}
&\mathcal{U}(z_{\mathrm{tar}},u_{\mathrm{tar}})
-\mathcal{U}(z_{\mathrm{lora}},u_{\mathrm{lora}}) \\
&=
\underbrace{\big(s(z_{\mathrm{tar}})\odot \Delta z\big)\odot u_{\mathrm{lora}}}_{\textnormal{residual through the current view}}
+
\underbrace{\big(s(z_{\mathrm{tar}})-s(z_{\mathrm{lora}})\big)\odot z_{\mathrm{lora}}\odot u_{\mathrm{lora}}}_{\textnormal{selection mismatch}}
+
\underbrace{\phi(z_{\mathrm{tar}})\odot \Delta u}_{\textnormal{view mismatch}} .
\end{split}
\end{equation}
\end{proposition}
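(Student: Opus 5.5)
The identity is purely algebraic and holds channel-wise, so we work with a single coordinate $i$ and drop the index. Write $s_{\mathrm{t}}=s(z_{\mathrm{tar}})$ and $s_{\mathrm{l}}=s(z_{\mathrm{lora}})$. By Definition~\ref{def:selection_weights}, $\phi(z_{\mathrm{tar}})=s_{\mathrm{t}}z_{\mathrm{tar}}$ and $\phi(z_{\mathrm{lora}})=s_{\mathrm{l}}z_{\mathrm{lora}}$. Here the selection weight is extended by continuity at the isolated ambiguous points, so this factorization holds everywhere. The left-hand side, with $\mathcal{U}$ from Insight~\ref{ins:selection_recalibration}, is therefore $s_{\mathrm{t}}z_{\mathrm{tar}}u_{\mathrm{tar}}-s_{\mathrm{l}}z_{\mathrm{lora}}u_{\mathrm{lora}}$.

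The plan is a telescoping argument with two inserted intermediate terms. First, we separate the view change by adding and subtracting $\phi(z_{\mathrm{tar}})u_{\mathrm{lora}}$:
\[
\phi(z_{\mathrm{tar}})u_{\mathrm{tar}}-\phi(z_{\mathrm{lora}})u_{\mathrm{lora}}
=\phi(z_{\mathrm{tar}})(u_{\mathrm{tar}}-u_{\mathrm{lora}})+\big(\phi(z_{\mathrm{tar}})-\phi(z_{\mathrm{lora}})\big)u_{\mathrm{lora}}.
\]
Since $\Delta u=u_{\mathrm{tar}}-u_{\mathrm{lora}}$, the first summand is exactly the view-mismatch term.

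Next, we split the gate difference by adding and subtracting $s_{\mathrm{t}}z_{\mathrm{lora}}$:
\[
s_{\mathrm{t}}z_{\mathrm{tar}}-s_{\mathrm{l}}z_{\mathrm{lora}}=s_{\mathrm{t}}(z_{\mathrm{tar}}-z_{\mathrm{lora}})+(s_{\mathrm{t}}-s_{\mathrm{l}})z_{\mathrm{lora}}.
\]
Using $\Delta z=z_{\mathrm{tar}}-z_{\mathrm{lora}}=E_gx$ and multiplying by $u_{\mathrm{lora}}$, this gives the residual-through-the-current-view term and the selection-mismatch term. Collecting the three pieces yields \eqref{eq:exact_selection_decomp}. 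Because the Hadamard product is commutative and associative, the coordinate computation lifts to vectors, which justifies the grouping $(s\odot\Delta z)\odot u_{\mathrm{lora}}$.

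No approximation or smoothness of $\varrho$ is needed. The only point requiring care is the choice of insertion points. We evaluate $s$ at $z_{\mathrm{tar}}$ in the residual term and multiply $z_{\mathrm{lora}}$ in the selection term, and we weight the view term by $\phi(z_{\mathrm{tar}})$ rather than $\phi(z_{\mathrm{lora}})$. Each of these must be matched consistently against the statement, since a different telescoping order would produce a valid but different decomposition. The main obstacle is therefore only bookkeeping: we check that the two inserted cross terms, $\phi(z_{\mathrm{tar}})u_{\mathrm{lora}}$ and $s_{\mathrm{t}}z_{\mathrm{lora}}u_{\mathrm{lora}}$, cancel exactly.
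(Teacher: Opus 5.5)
Your proposal is correct and follows the paper's own proof exactly: you add and subtract $\phi(z_{\mathrm{tar}})\odot u_{\mathrm{lora}}$ to isolate the view-mismatch term, then split $s(z_{\mathrm{tar}})\odot z_{\mathrm{tar}}-s(z_{\mathrm{lora}})\odot z_{\mathrm{lora}}$ by inserting $s(z_{\mathrm{tar}})\odot z_{\mathrm{lora}}$. The remark on the continuity convention is harmless: since $\phi(z)=\varrho(z)z$ by definition, the factorization already holds at every point.
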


The second term is the object ignored by a purely weight-space view.  It is small only when the low-rank residual induces little change in the decision weights, or when the affected channels have little utility.  Thus, what matters is not just $\|E_g\|$, but how $E_gx$ aligns with gate-sensitive and high-utility directions on the data distribution.

\begin{corollary}[Selection change is a behavior-space control variable]
\label{cor:selection_amplification}
For any $\ell_p$ norm compatible with the elementwise product,
\begin{equation}
\label{eq:selection_amplification_bound}
\begin{split}
&\|\mathcal{U}(z_{\mathrm{tar}},u_{\mathrm{tar}})
-\mathcal{U}(z_{\mathrm{lora}},u_{\mathrm{lora}})\|_p \\
&\le
\|u_{\mathrm{lora}}\|_{\infty}
\Big(
\|s(z_{\mathrm{tar}})\|_{\infty}\|\Delta z\|_p
+
\|z_{\mathrm{lora}}\|_{\infty}\|s(z_{\mathrm{tar}})-s(z_{\mathrm{lora}})\|_p
\Big)
+
\|\phi(z_{\mathrm{tar}})\|_{\infty}\|\Delta u\|_p.
\end{split}
\end{equation}
Consequently, a useful behavior-space bound requires controlling target-misaligned selection change $\|s(z_{\mathrm{tar}})-s(z_{\mathrm{lora}})\|_p$ in addition to the projection residuals.
\end{corollary}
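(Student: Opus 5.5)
The bound follows from Proposition~\ref{prop:exact_selection_decomp}. We take $\ell_p$ norms of both sides of \eqref{eq:exact_selection_decomp} and apply the triangle inequality, which splits the left-hand side of \eqref{eq:selection_amplification_bound} into three norms: one each for the residual-through-view, selection-mismatch, and view-mismatch terms. No information is lost at this step because the decomposition is exact.

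The one tool needed is the elementwise Hölder-type inequality $\|a\odot b\|_p\le\|a\|_\infty\|b\|_p$. It holds for every $p\in[1,\infty]$, since $|a_ib_i|\le\|a\|_\infty|b_i|$ coordinatewise and $\ell_p$ norms are monotone in the absolute values of the entries. This is what the statement means by a norm ``compatible with the elementwise product.'' We apply it once or twice per term, always keeping the factor that carries the mismatch in $\ell_p$ and moving every other factor into $\ell_\infty$:
\begin{itemize}
\item For the first term, peel off $u_{\mathrm{lora}}$ and then $s(z_{\mathrm{tar}})$. This gives $\|u_{\mathrm{lora}}\|_\infty\|s(z_{\mathrm{tar}})\|_\infty\|\Delta z\|_p$.
\item For the second term, group it as $(z_{\mathrm{lora}}\odot u_{\mathrm{lora}})\odot(s(z_{\mathrm{tar}})-s(z_{\mathrm{lora}}))$, using commutativity and associativity of $\odot$. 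Then use $\|z_{\mathrm{lora}}\odot u_{\mathrm{lora}}\|_\infty\le\|z_{\mathrm{lora}}\|_\infty\|u_{\mathrm{lora}}\|_\infty$. This gives $\|u_{\mathrm{lora}}\|_\infty\|z_{\mathrm{lora}}\|_\infty\|s(z_{\mathrm{tar}})-s(z_{\mathrm{lora}})\|_p$.
\item For the third term, the bound is directly $\|\phi(z_{\mathrm{tar}})\|_\infty\|\Delta u\|_p$.
\end{itemize}
Factoring the common $\|u_{\mathrm{lora}}\|_\infty$ out of the first two bounds yields exactly \eqref{eq:selection_amplification_bound}.

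There is no real obstacle; the only care needed is choosing which factor stays in $\ell_p$, and the grouping above does this. The ``consequently'' clause is interpretive. It follows because the selection term is the only summand that is not controlled by the projection residuals $\|\Delta z\|_p$ and $\|\Delta u\|_p$: its coefficient $\|z_{\mathrm{lora}}\|_\infty\|u_{\mathrm{lora}}\|_\infty$ need not be small. Moreover, $\|s(z_{\mathrm{tar}})-s(z_{\mathrm{lora}})\|_p$ is bounded through $\Delta z$ only via the local Lipschitz constant of $\varrho$. That constant can be large in responsive regimes, so a tight behavior-space bound must control this quantity separately.
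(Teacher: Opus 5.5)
Your proof is correct and follows the paper's argument: apply the triangle inequality to the exact decomposition of Proposition~\ref{prop:exact_selection_decomp}, then bound each Hadamard product with $\|a\odot b\|_p\le\|a\|_\infty\|b\|_p$, keeping the mismatch factor in $\ell_p$. Your reading of the ``consequently'' clause is interpretive, as it is in the paper.
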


\begin{proposition}[Sensitivity-weighted selection mismatch]
\label{prop:sensitivity_weighted_mismatch}
Assume $s$ is differentiable on the segment between $z_{\mathrm{lora}}$ and $z_{\mathrm{tar}}$.  For each channel $c$, there exists an intermediate point $\bar z_c$ on this segment such that
\begin{equation}
\label{eq:sensitivity_weighted_mismatch}
\big(s(z_{\mathrm{tar},c})-s(z_{\mathrm{lora},c})\big)z_{\mathrm{lora},c}u_{\mathrm{lora},c}
=
 s'(\bar z_c)\,\Delta z_c\,z_{\mathrm{lora},c}u_{\mathrm{lora},c}.
\end{equation}
Thus the squared selection-mismatch energy is governed by
\begin{equation}
\label{eq:sensitivity_weighted_residual}
\sum_c |s'(\bar z_c)|^2 |z_{\mathrm{lora},c}|^2 |u_{\mathrm{lora},c}|^2 |\Delta z_c|^2.
\end{equation}
\end{proposition}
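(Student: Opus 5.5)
The argument is a channel-wise application of the one-dimensional mean value theorem, followed by squaring and summing over channels. First fix a channel $c$. Since $s$ acts elementwise, $s_c(z)=\varrho(z_c)$ depends only on the scalar coordinate $z_c$. The hypothesis that $s$ is differentiable on the segment between $z_{\mathrm{lora}}$ and $z_{\mathrm{tar}}$ therefore reduces, for channel $c$, to differentiability of the scalar map $t\mapsto\varrho(t)$ on the closed interval with endpoints $z_{\mathrm{lora},c}$ and $z_{\mathrm{tar},c}$. Write $s'$ for its derivative.

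I will read the hypothesis as guaranteeing continuity on the closed interval and differentiability on its interior. For self-gated activations such as SiLU, where $\varrho=\sigma$ is smooth, this is automatic. At any isolated ratio-ambiguity point, $s$ is defined by continuity (Definition~\ref{def:selection_weights}), so continuity still holds there.

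Two cases follow.
\begin{itemize}
\item If $\Delta z_c=z_{\mathrm{tar},c}-z_{\mathrm{lora},c}=0$, both sides of \eqref{eq:sensitivity_weighted_mismatch} vanish. Any point, e.g.\ $\bar z_c=z_{\mathrm{lora},c}$, works.
\item Otherwise, Lagrange's mean value theorem gives a $\bar z_c$ strictly between the endpoints with $\varrho(z_{\mathrm{tar},c})-\varrho(z_{\mathrm{lora},c})=s'(\bar z_c)\,\Delta z_c$.
\end{itemize}
Multiplying this identity by the fixed scalars $z_{\mathrm{lora},c}u_{\mathrm{lora},c}$ yields \eqref{eq:sensitivity_weighted_mismatch} exactly. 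This is precisely the $c$-th coordinate of the selection-mismatch term in Proposition~\ref{prop:exact_selection_decomp}.

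For the energy statement, I take the squared $\ell_2$ norm of the selection-mismatch term, $\sum_c\big|(s(z_{\mathrm{tar},c})-s(z_{\mathrm{lora},c}))z_{\mathrm{lora},c}u_{\mathrm{lora},c}\big|^2$. Substituting the channel-wise identity and using multiplicativity of the absolute value gives exactly the sum \eqref{eq:sensitivity_weighted_residual}. So ``governed by'' holds with equality, not merely as a bound. As a complementary remark, $|s'(\bar z_c)|\le\sup_{t}|s'(t)|$ over the segment, which gives an upper bound free of the unknown intermediate points.

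The main obstacle is the interpretive point that the intermediate point must be chosen separately for each channel. The vector mean value theorem fails as an equality for vector-valued maps, so one cannot use a single $\bar z$ on the vector segment. The elementwise structure of $s$ sidesteps this, and I would state this explicitly. I would also note that $\bar z_c$ need not be unique, and that the vector $(\bar z_c)_c$ lies in the axis-aligned box spanned by $z_{\mathrm{lora}}$ and $z_{\mathrm{tar}}$ rather than on the segment itself. Each coordinate, however, lies on the corresponding one-dimensional segment, which is all the statement requires.
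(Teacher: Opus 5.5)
Your proposal is correct and follows the paper's proof: you apply the scalar mean value theorem to $s$ channel by channel on the interval between $z_{\mathrm{lora},c}$ and $z_{\mathrm{tar},c}$, multiply by $z_{\mathrm{lora},c}u_{\mathrm{lora},c}$, and then square and sum over channels. Your extra care only makes explicit what the paper leaves implicit: the degenerate case $\Delta z_c=0$, the need to choose a separate $\bar z_c$ for each coordinate, and the observation that the energy expression equals the squared $\ell_2$ norm of the selection-mismatch term exactly.
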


This expression gives the rank-allocation principle behind NA-LoRA: under a fixed low-rank budget, updates should prioritize channels that are both responsive to gate perturbations and repeatedly useful for the FFN output.  Directly estimating $s'(\bar z_c)$ and the target residual is impractical, so we next introduce a gate statistic that is available during ordinary LoRA training.

\subsection{Gate sensitivity and effective homogeneity}
\label{subsec:gate_sensitivity}

The MCDM view treats activation as a decision rule, but a training algorithm needs a computable statistic.  We use two related quantities.  The first is the derivative magnitude $|\phi'(z)|$, which directly modulates the gate-projection gradient.  For an upstream signal $g$ of the same shape as $h$,
\begin{equation}
\label{eq:gate_grad_main}
\nabla_{W_g}\mathcal{L}=(g\odot u\odot \phi'(z))x^\top.
\end{equation}
Hence saturated or insensitive gate states produce low-utility gate updates, whereas responsive states make LoRA updates on the gate path effective.  For the commonly used SiLU gate $\phi(z)=z\sigma(z)$,
\begin{equation}
\label{eq:keff_silu_main}
|\phi'(z)|=\big|\sigma(z)+z\sigma(z)(1-\sigma(z))\big|.
\end{equation}

The second quantity describes the local scaling behavior of the channel.
\begin{definition}[Effective homogeneity profile]
\label{def:kappa_main}
For a differentiable self-gated nonlinearity $\phi$, define
\begin{equation}
\label{eq:kappa_def_main}
\kappa_\phi(z):=\frac{z\phi'(z)}{\phi(z)+\epsilon},
\qquad
\eh(z):=1+\kappa_\phi(z),
\end{equation}
where $\epsilon>0$ is used only for numerical stability.  We call $\eh(z)$ the effective homogeneity profile of the FFN channel.
\end{definition}

\begin{proposition}[Channel-wise local scaling sensitivity]
\label{prop:local_scaling_main}
Let $h_c(x)=\phi(z_c(x))u_c(x)$, where $z_c(x)=W_{g,c}x$ and $u_c(x)=W_{u,c}x$.  Along the scaling ray $x\mapsto \alpha x$, the logarithmic elasticity at $\alpha=1$ is
\begin{equation}
\left.\frac{\partial \log |h_c(\alpha x)|}{\partial \log \alpha}\right|_{\alpha=1}
=
1+\kappa_\phi(z_c(x))=\eh(z_c(x)).
\end{equation}
\end{proposition}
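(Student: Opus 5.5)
The plan is a direct chain-rule computation along the ray, using linearity of the two projections.

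First, since $z_c(\alpha x)=W_{g,c}(\alpha x)=\alpha z_c(x)$ and likewise $u_c(\alpha x)=\alpha u_c(x)$ (no biases, as stipulated after \eqref{eq:ffn_block}), we have
\begin{equation*}
h_c(\alpha x)=\phi(\alpha z_c)\,\alpha u_c .
\end{equation*}
We assume $h_c(x)\neq 0$, i.e.\ $\phi(z_c)\neq 0$ and $u_c\neq 0$, so that the logarithm is defined in a neighborhood of $\alpha=1$. Taking logs of absolute values then gives
\begin{equation*}
\log|h_c(\alpha x)|=\log|\phi(\alpha z_c)|+\log\alpha+\log|u_c| .
\end{equation*}
The last term is independent of $\alpha$. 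The middle term contributes exactly $1$ to $\partial/\partial\log\alpha$; this is the homogeneity-one contribution of the linear feature view $u$.

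Second, we differentiate the gate term. With $\partial/\partial\log\alpha=\alpha\,\partial/\partial\alpha$,
\begin{equation*}
\alpha\frac{\partial}{\partial\alpha}\log|\phi(\alpha z_c)|=\frac{\alpha z_c\,\phi'(\alpha z_c)}{\phi(\alpha z_c)} .
\end{equation*}
Evaluating at $\alpha=1$ gives $z_c\phi'(z_c)/\phi(z_c)$. This is $\kappa_\phi(z_c)$ from Definition~\ref{def:kappa_main}, so summing the pieces yields $1+\kappa_\phi(z_c(x))=\eh(z_c(x))$.

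The only delicate point is the $\epsilon$ in Definition~\ref{def:kappa_main}. The exact elasticity is $z\phi'(z)/\phi(z)$, so the identity holds exactly with $\epsilon=0$, and with $\epsilon>0$ only up to the stabilizer. I would state the result for the $\epsilon\to 0$ form on the set where $\phi(z_c)\neq 0$, and remark that the $\epsilon$ version agrees up to $O(\epsilon)$ there. For SiLU, the set $\phi(z_c)\neq0$ is simply $z_c\neq0$; at $z_c=0$ the ratio $z\phi'(z)/\phi(z)\to1$, so the continuity convention of Definition~\ref{def:selection_weights} extends the formula. Beyond these caveats, no earlier result is needed.
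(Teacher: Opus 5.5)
Your proposal is correct and follows essentially the same route as the paper: a direct chain-rule computation along the ray using $z_c(\alpha x)=\alpha z_c$ and $u_c(\alpha x)=\alpha u_c$. Splitting $\log|h_c|$ into $\log|\phi(\alpha z_c)|+\log\alpha+\log|u_c|$ is only a cosmetic variant of the paper's step, which differentiates $g(\alpha)=\phi(\alpha z)\,\alpha u$ and divides by $g(1)$; your caveats are more careful than the paper, which divides by $\phi(z)u$ without assuming $h_c(x)\neq0$ and equates $z\phi'(z)/\phi(z)$ with the $\epsilon$-regularized $\kappa_\phi$, exact only at $\epsilon=0$ (at $z_c=0$ the left side is undefined, so any extension there is merely a convention).
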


Proposition~\ref{prop:local_scaling_main} shows that the gate pre-activation controls the local scaling regime of each FFN channel.  The coarse regimes used by NA-LoRA are obtained from the characteristic points of the activation-specific effective-homogeneity curve, such as zeros, extrema, or discontinuities of $\kappa_\phi$.  For SwiGLU this yields suppressive, responsive, and positive non-amplifying regions; for piecewise-linear or other smooth gates the partition can collapse or become uninformative.  We therefore define the principle here and instantiate thresholds only through activation-specific characteristic points used in each experiment.

\begin{definition}[Effective-homogeneity movement]
\label{def:eh_constraint}
For consecutive LoRA states $\theta_t$ and $\theta_{t+1}$, define the data-dependent homogeneity movement
\begin{equation}
\label{eq:eh_consistency}
\mathcal{D}_{\mathrm{EH}}(t)
:=
\mathbb{E}_{x}\!\left[
\sum_{c=1}^{d_h} \omega_{t,c}(x)
\big(\eh(z_{t+1,c}(x))-\eh(z_{t,c}(x))\big)^2
\right],
\end{equation}
where $\omega_{t,c}(x)\ge0$ is a channel-importance weight, instantiated in NA-LoRA by the derivative-based mask.  We use $\mathcal{D}_{\mathrm{EH}}(t)$ as a local trajectory diagnostic and design guide, not as an auxiliary loss or as a requirement that cumulative raw gate movement decrease.
\end{definition}

\begin{proposition}[A local bound on homogeneity movement]
\label{prop:eh_drift_bound_main}
Assume $\kappa_\phi$ is $L_\kappa$-Lipschitz on the gate region visited by the minibatch.  Let $\delta z_{t,c}(x)=z_{t+1,c}(x)-z_{t,c}(x)$.  Then
\begin{equation}
\label{eq:eh_drift_bound_main}
\big|\eh(z_{t+1,c}(x))-\eh(z_{t,c}(x))\big|
\le L_\kappa |\delta z_{t,c}(x)|.
\end{equation}
If the LoRA-only update is scaled by $s_t$ and masked by $m_{t,c}$ on the gate output channel, then locally
\begin{equation}
\label{eq:eh_drift_mask_scale_bound}
\mathbb{E}_x\big[|\eh(z_{t+1,c})-\eh(z_{t,c})|^2\big]
\lesssim
\eta^2 s_t^2 m_{t,c}^2 L_\kappa^2\,
\mathbb{E}_x\big[\|G_{g,c,t}\|^2\|x\|^2\big],
\end{equation}
where $G_{g,c,t}$ denotes the effective gate-projection gradient for channel $c$ and $\eta$ is the learning rate.
\end{proposition}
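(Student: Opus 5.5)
The first inequality follows directly from the definition of $\eh$ together with the Lipschitz assumption. Since $\eh(z)=1+\kappa_\phi(z)$, the constant cancels in the difference:
\[
\eh(z_{t+1,c}(x))-\eh(z_{t,c}(x))=\kappa_\phi(z_{t+1,c}(x))-\kappa_\phi(z_{t,c}(x)).
\]
Both $z_{t,c}(x)$ and $z_{t+1,c}(x)$ lie in the gate region visited by the minibatch, where $\kappa_\phi$ is $L_\kappa$-Lipschitz by assumption. Hence the absolute difference is at most $L_\kappa|z_{t+1,c}(x)-z_{t,c}(x)|=L_\kappa|\delta z_{t,c}(x)|$. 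One caveat: if the region is not convex, the segment between the two points may leave it. I would therefore read the assumption as Lipschitz continuity along the segment joining consecutive pre-activations, which is the sense in which it is stated.

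For the second display, I will express $\delta z_{t,c}(x)$ through the update rule.

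\textbf{Step 1: write the update.} The gate row for channel $c$ changes as
\[
W_{g,c}^{(t+1)}-W_{g,c}^{(t)}=-\eta s_t m_{t,c}G_{g,c,t}.
\]
Here $G_{g,c,t}$ is the effective row-wise update direction produced by the optimizer on the LoRA factors and projected onto output channel $c$. Consequently $\delta z_{t,c}(x)=-\eta s_t m_{t,c}\langle G_{g,c,t},x\rangle$.

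\textbf{Step 2: apply Cauchy--Schwarz.} This gives $|\delta z_{t,c}(x)|\le \eta s_t m_{t,c}\|G_{g,c,t}\|\|x\|$.

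\textbf{Step 3: combine.} Squaring the first inequality, inserting this bound, and taking $\mathbb{E}_x$ yields the claim. Since $\eta$, $s_t$, $m_{t,c}$ and $L_\kappa$ do not depend on $x$, they come out of the expectation. The gradient stays inside because $G_{g,c,t}$ may be a per-example quantity. If instead it is a minibatch quantity, it simply factors out as well.

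\textbf{Main obstacle.} The hard step is justifying Step 1, which is why the statement says "locally" and uses $\lesssim$. In LoRA the gate update is $B_{t+1}A_{t+1}-B_tA_t$, not a single gradient step on $W_g$. Expanding it gives
\[
(\Delta B)A_t+B_t(\Delta A)+(\Delta B)(\Delta A).
\]
The first two terms are linear in $\eta s_t$ and together define the effective gradient $G_{g,c,t}$ once the mask $m_{t,c}$ is applied to row $c$ of the $B$-side contribution. The cross term is $O(\eta^2 s_t^2)$ and is absorbed into the local $\lesssim$. I would state this explicitly: to first order in $\eta$, $\delta z_{t,c}(x)=-\eta s_t m_{t,c}G_{g,c,t}x+O(\eta^2)$. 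This also requires that the mask acts multiplicatively on the channel-$c$ row of the update; for instance, masking the gradient of $B_g$ row $c$ achieves this, with the $A$-side contribution folded into $G_{g,c,t}$.

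If the optimizer is Adam rather than SGD, $G_{g,c,t}$ is to be read as the preconditioned update direction. The bound remains of the same form.
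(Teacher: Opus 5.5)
Your proposal is correct and follows essentially the same route as the paper's proof: the constant in $\eh=1+\kappa_\phi$ cancels so Lipschitzness gives the first bound, and a first-order linearization $\delta z_{t,c}(x)\approx-\eta s_t m_{t,c}\langle G_{g,c,t},x\rangle$ followed by Cauchy--Schwarz, squaring, and taking $\mathbb{E}_x$ (dropping higher-order terms) gives the second. Your explicit expansion of $B_{t+1}A_{t+1}-B_tA_t$ is more careful than the paper, which simply absorbs the low-rank structure into the definition of $G_{g,c,t}$. One caveat: masking $\nabla B_g$ leaves the first-order term $(B_t)_{c,:}\Delta A$ unmasked. So $m_{t,c}$ factors out only if that term enters $G_{g,c,t}$ divided by $m_{t,c}$, or if the hypothesis is read as masking the entire channel-$c$ update.
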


This bound clarifies the two control variables used by NA-LoRA.  The mask controls \emph{where} homogeneity-sensitive movement is emphasized by downweighting low-responsiveness channels, and the step scale locally modulates \emph{how fast} FFN LoRA parameters move at each optimization step.  In practice we do not add Eq.~\eqref{eq:eh_consistency} as a loss; we implement its implication as lightweight training-time control on LoRA gradients.

\subsection{Temporal control as a design principle}
\label{subsec:temporal_stability_main}

Selection mismatch can accumulate over training.  Let $D_t(x)$ denote any scalar decision score induced by the FFN block at optimization step $t$; it may be a channel score, a logit contribution, or a coarse regime indicator.  We use the following elementary margin principle to justify temporal control.

\begin{proposition}[Margin-based flip bound]
\label{prop:margin_flip_main}
For any threshold $\tau>0$,
\begin{equation}
\label{eq:margin_flip_bound}
\mathbb{P}\!\left[\mathrm{sign}(D_{t+1}(x))\neq \mathrm{sign}(D_t(x))\right]
\le
\mathbb{P}\!\left[|D_t(x)|\le \tau\right]
+
\mathbb{P}\!\left[|D_{t+1}(x)-D_t(x)|>\tau\right].
\end{equation}
\end{proposition}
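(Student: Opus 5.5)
The plan is an event-inclusion argument followed by a union bound, applied pointwise in $x$ and then integrated. I fix $\tau>0$ and define three events over the data distribution:
\[
F:=\{\mathrm{sign}(D_{t+1}(x))\neq \mathrm{sign}(D_t(x))\},\quad
M:=\{|D_t(x)|\le \tau\},\quad
J:=\{|D_{t+1}(x)-D_t(x)|>\tau\}.
\]
The goal is the inclusion $F\subseteq M\cup J$. Monotonicity and subadditivity of $\mathbb{P}$ then give Eq.~\eqref{eq:margin_flip_bound} directly.

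To prove the inclusion, I argue by contrapositive. Suppose $x\notin M\cup J$, so $|D_t(x)|>\tau$ and $|D_{t+1}(x)-D_t(x)|\le\tau$. Assume without loss of generality that $D_t(x)>\tau$; the negative case is symmetric. Then
\[
D_{t+1}(x)\ge D_t(x)-|D_{t+1}(x)-D_t(x)|>\tau-\tau=0,
\]
so $\mathrm{sign}(D_{t+1}(x))=\mathrm{sign}(D_t(x))=+1$, and hence $x\notin F$.

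The only delicate point is the convention for $\mathrm{sign}(0)$ and the strict versus non-strict inequalities. Because $|D_t|>\tau$ is strict, $D_t\neq 0$. The displayed inequality is also strict, so $D_{t+1}\neq 0$. Therefore no zero-sign ambiguity arises, whatever convention is used for $\mathrm{sign}(0)$.

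Beyond this, I only need $D_t$ and $D_{t+1}$ to be measurable functions of $x$, so that the three events are measurable. No property of the FFN, of $\eh$, or of the earlier propositions is needed: the bound is purely elementary. I do not expect any step to be a real obstacle. The one thing to check carefully is the boundary bookkeeping just described, namely that pairing $\le\tau$ in $M$ with $>\tau$ in $J$ makes the complement conditions strict enough to exclude a sign flip.
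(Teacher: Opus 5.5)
Your proposal is correct and is essentially the paper's proof: both show that a sign flip forces $|D_t(x)|\le\tau$ or $|D_{t+1}(x)-D_t(x)|>\tau$, since otherwise $D_{t+1}(x)$ cannot cross zero, and then take probabilities with a union bound. Your explicit check that the strict inequalities exclude $D_t(x)=0$ and $D_{t+1}(x)=0$, so the convention for $\mathrm{sign}(0)$ is irrelevant, is a small refinement the paper leaves implicit.
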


The first term captures inherently low-margin inputs.  The second term captures temporal score movement induced by the update.  NA-LoRA uses this term to motivate step scaling: the method shapes one-step movement while still allowing, and sometimes amplifying, task-driven movement on responsive channels.

\begin{corollary}[Step-size control of one-step movement]
\label{cor:step_drift_control}
Let $\theta_t$ denote the LoRA parameters affecting $D_t(x)$.  Assume $D_t(x)$ is locally Lipschitz in $\theta_t$ with constant $L_D(x)$ and the LoRA update has the form $\theta_{t+1}=\theta_t-\eta s_t g_t$.  Then, for any $\tau>0$,
\begin{equation}
\label{eq:step_drift_bound}
\mathbb{P}\!\big(|D_{t+1}(x)-D_t(x)|>\tau\big)
\le
\frac{\eta^2 s_t^2}{\tau^2}\,
\mathbb{E}\!\left[L_D(x)^2\|g_t\|^2\right].
\end{equation}
\end{corollary}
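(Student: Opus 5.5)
The plan is to combine the local Lipschitz assumption with Chebyshev's inequality, viewing the input $x$ as the only source of randomness. The LoRA step $g_t$ and scale $s_t$ are fixed functions of the current state and minibatch. If one also wants to average over minibatch randomness, we condition on it first and take an outer expectation at the end.

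\textbf{Step 1 (pointwise movement bound).} By the update rule, $\theta_{t+1}-\theta_t=-\eta s_t g_t$. The local Lipschitz assumption on $\theta\mapsto D_t(x)$ then gives, for every $x$,
\begin{equation*}
|D_{t+1}(x)-D_t(x)|\le L_D(x)\,\|\theta_{t+1}-\theta_t\| = \eta\, s_t\, L_D(x)\,\|g_t\|.
\end{equation*}
Here $s_t\ge 0$ is taken as a nonnegative scalar step scale, as in the method. We must be explicit that $D_{t+1}$ is the same score map evaluated at the new parameters, with the frozen backbone unchanged. We also need $\theta_{t+1}$ to lie in the neighbourhood where the Lipschitz constant $L_D(x)$ is valid. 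This is the meaning of ``locally'': we assume the ball of radius $\eta s_t\|g_t\|$ around $\theta_t$ is contained in that neighbourhood.

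\textbf{Step 2 (Chebyshev/Markov).} Applying Markov's inequality to the nonnegative random variable $|D_{t+1}(x)-D_t(x)|^2$ gives
\begin{equation*}
\mathbb{P}\big(|D_{t+1}(x)-D_t(x)|>\tau\big)\le \frac{\mathbb{E}\big[|D_{t+1}(x)-D_t(x)|^2\big]}{\tau^2}.
\end{equation*}
Inserting the squared bound from Step 1 yields $\eta^2 s_t^2\,\mathbb{E}[L_D(x)^2\|g_t\|^2]/\tau^2$, which is \eqref{eq:step_drift_bound}. Keeping $\|g_t\|^2$ inside the expectation lets the same argument cover the case where $g_t$ is random, for example a stochastic minibatch gradient jointly distributed with $x$. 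No independence assumption is needed, because Step 1 holds pointwise on the joint sample space.

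\textbf{Main obstacle.} The algebra is routine. The only delicate point is making the ``local'' Lipschitz hypothesis rigorous, since a global constant need not exist for a score routed through $\phi$ and products of LoRA factors. I would handle it by stating the corollary on the event that the step stays inside the Lipschitz neighbourhood. Alternatively, one can add the probability of leaving it as an extra term, which vanishes for sufficiently small $\eta s_t$.

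Finally, the result can be combined with Proposition~\ref{prop:margin_flip_main}. Substituting the bound into its second term gives a flip-probability bound of the form $\mathbb{P}[|D_t|\le\tau]+\eta^2 s_t^2\tau^{-2}\mathbb{E}[L_D^2\|g_t\|^2]$, which is the intended interpretation of $s_t$ as a control variable.
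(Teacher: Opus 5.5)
Your proposal is correct and follows the paper's proof. Both use the pointwise Lipschitz bound $|D_{t+1}(x)-D_t(x)|\le \eta s_t L_D(x)\|g_t\|$ and then Markov's inequality on a squared quantity: the paper applies Markov directly to $\eta^2 s_t^2 L_D(x)^2\|g_t\|^2$, while you apply it to $|D_{t+1}(x)-D_t(x)|^2$ and then insert the pointwise bound, and your explicit treatment of the local-Lipschitz neighbourhood is extra care the paper omits.
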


Together, the decomposition, effective-homogeneity profile, and temporal margin bound yield the core design rule of NA-LoRA: allocate rank to responsive selection channels and shape the speed and location of nonlinear gate movement under the low-rank budget.  Figure~\ref{fig:method} summarizes how this principle is implemented as a training-time controller.

\begin{figure*}[t]
    \centering
    \includegraphics[width=\textwidth]{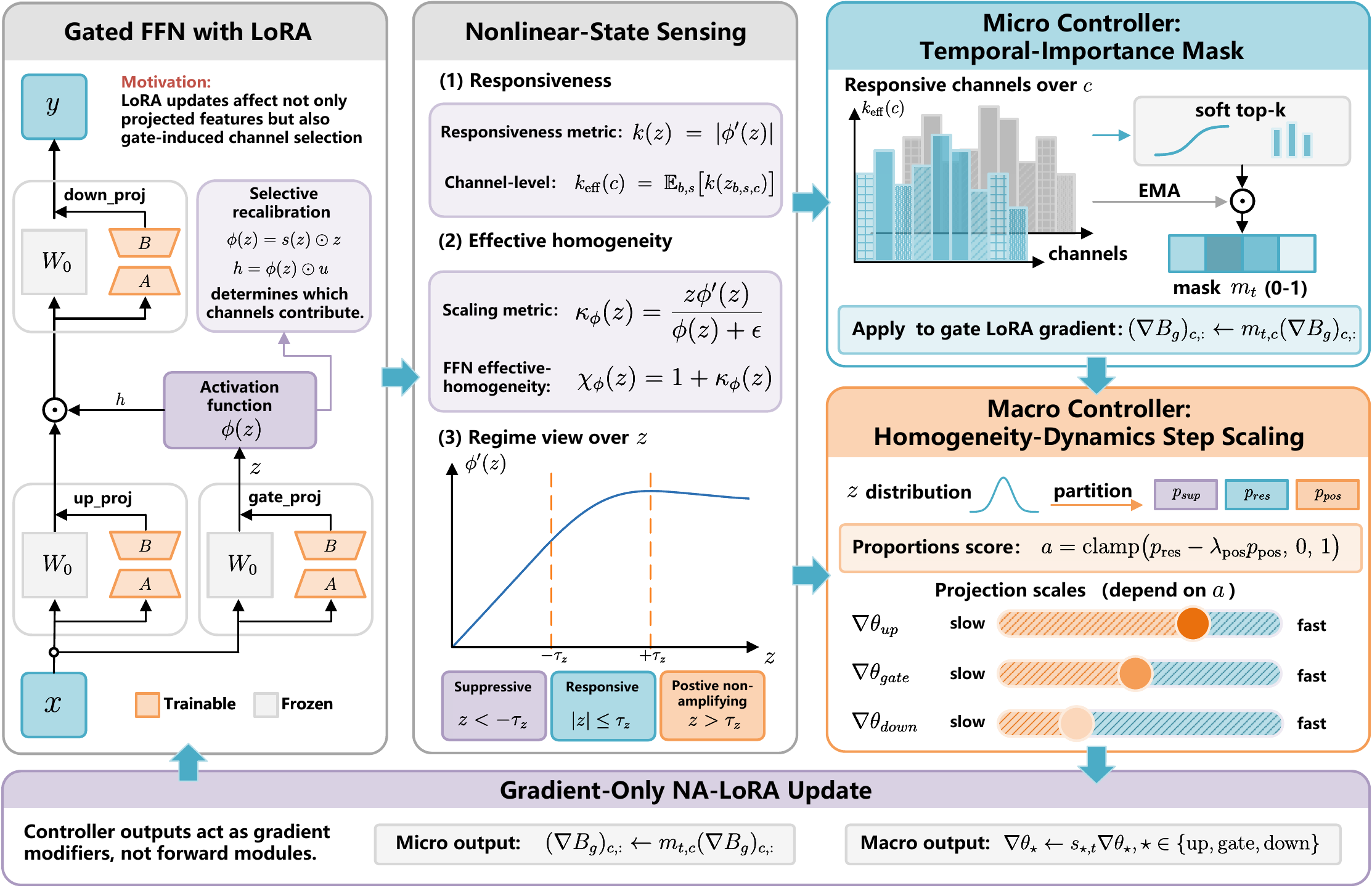}
    \caption{\textbf{NA-LoRA overview.} The temporal-importance mask allocates gate-related low-rank updates to responsive channels, while activation-specific step scaling modulates FFN LoRA update speed when a non-degenerate coarse regime statistic is available. Both controls are training-only.}
    \label{fig:method}
\end{figure*}

\section{Nonlinearity-Aware LoRA}
\label{sec:method}

\paragraph{What problem does NA-LoRA solve?}
Standard LoRA assigns low-rank updates according to modules, but not according to the nonlinear state of FFN channels.  In a gated FFN, however, the usefulness of updating a gate channel depends on whether that channel is locally insensitive, responsive, suppressive, or already positive and non-amplifying.  Under a tight rank budget, treating all channels uniformly can spend update capacity on directions that barely affect the activation-induced selection rule.  NA-LoRA addresses this \emph{nonlinearity-blind allocation} problem.  It complements gradient- and initialization-centric LoRA variants by asking where low-rank motion should be expressed inside the nonlinear FFN, rather than only how the low-rank update should approximate a full update.  Its goal is not to minimize raw sigmoid-gate movement, but to make LoRA-induced movement more concentrated on effective-homogeneity-sensitive channels and more structured over training.

\paragraph{Why should this be effective?}
The analysis in Section~\ref{sec:problem} gives two usable signals.  First, Eq.~\eqref{eq:gate_grad_main} shows that the actual gate-projection gradient is modulated by $\phi'(z)$; channels with larger $|\phi'(z)|$ have larger marginal ability to change the FFN utility through the gate path.  Second, the effective-homogeneity profile $\kappa_\phi(z)$ identifies activation-specific regions in which gate movement has different functional meaning.  These signals let NA-LoRA spend rank where gate changes are likely to matter and, when a non-degenerate coarse regime statistic exists, regulate FFN update speed according to the current nonlinear regime.  This interpretation is consistent with our diagnostics in Section~\ref{sec:exp}: NA-LoRA exhibits larger raw gate movement than LoRA, but that movement is accompanied by stronger and more sustained effective-homogeneity modulation, especially in middle and deeper layers.

NA-LoRA has two training-only components.  The \emph{temporal-importance channel mask} decides \emph{where} gate-related LoRA capacity should be used.  The \emph{homogeneity-dynamics step scaling} decides \emph{how fast} LoRA parameters in the FFN should move when the activation admits a meaningful coarse gate-regime statistic.  When the coarse statistic is unavailable or degenerate, the scaling is disabled and NA-LoRA reduces to sensitivity-based channel allocation.  Neither component changes the frozen backbone, adds an auxiliary loss, or introduces inference-time computation.

\subsection{Temporal-importance channel mask}
\label{subsec:temporal_importance_mask}

\paragraph{Channel allocation.}
Proposition~\ref{prop:sensitivity_weighted_mismatch} and Eq.~\eqref{eq:gate_grad_main} imply that gate-related updates are most useful when they affect channels that are both selected by the data and locally responsive to gate perturbations.  We use the derivative magnitude
\begin{equation}
\label{eq:keff_elem}
k(z):=|\phi'(z)|
\end{equation}
as a first-order proxy for this responsiveness.  For SiLU, this is
\begin{equation}
\label{eq:keff_silu}
k(z)=\big|\sigma(z)+z\sigma(z)(1-\sigma(z))\big|.
\end{equation}
Given gate pre-activations $z\in\mathbb{R}^{B\times S\times d_h}$ in one FFN block, the per-channel score is
\begin{equation}
\label{eq:kout_channel}
k_{\mathrm{eff}}(c):=\mathbb{E}_{b,s}\!\left[k(z_{b,s,c})\right],\qquad c\in[d_h].
\end{equation}
Large $k_{\mathrm{eff}}(c)$ indicates that channel $c$ repeatedly visits a state where gate-score perturbations can change the channel utility; small values indicate low marginal utility for gate-path adaptation.

\paragraph{Soft top-$k$ mask with temporal smoothing.}
Given a keep ratio $\rkeep\in(0,1]$, let
\begin{equation}
\tau_k=\mathrm{Quantile}\big(k_{\mathrm{eff}},1-\rkeep\big),
\end{equation}
and define
\begin{equation}
\label{eq:mask_soft}
m_c^{\mathrm{new}}=\sigma\!\big(\beta\,(k_{\mathrm{eff}}(c)-\tau_k)\big)\in(0,1),
\end{equation}
where $\beta>0$ controls boundary sharpness.  We smooth the mask over optimization steps,
\begin{equation}
\label{eq:mask_ema}
m_t=\eta_m m_{t-1}+(1-\eta_m)m^{\mathrm{new}},\qquad \eta_m\in[0,1),
\end{equation}
so that the selected support reflects persistent gate responsiveness rather than minibatch noise.

\paragraph{Gradient use.}
The mask is applied along the output-channel dimension of gate-related LoRA gradients.  With $B_g\in\mathbb{R}^{d_h\times r}$ and $A_g\in\mathbb{R}^{r\times d}$,
\begin{equation}
\label{eq:mask_apply_main}
(\nabla B_g)_{c,:}\leftarrow m_{t,c}(\nabla B_g)_{c,:}.
\end{equation}
Thus NA-LoRA changes only the adapter update.  Attention projections use standard LoRA because they do not contain the FFN gate-selection mechanism targeted here.

\subsection{Homogeneity-dynamics step scaling}
\label{subsec:homogeneity_dynamics_scaling}

\paragraph{Trajectory shaping.}
The mask reallocates gate-related LoRA updates across channels, but it does not control the overall speed of FFN LoRA updates.  Corollary~\ref{cor:step_drift_control} and Proposition~\ref{prop:eh_drift_bound_main} show that, locally, a LoRA-only scale controls an upper bound on one-step score and homogeneity change.  We use this fact as a trajectory-shaping device rather than as a claim that cumulative raw gate movement should be smaller than vanilla LoRA.  The scale can accelerate adaptation when a block has substantial responsive mass and can damp updates when the block is dominated by already positive non-amplifying states.  This component is used only when the activation provides a non-degenerate coarse effective-homogeneity partition.

\paragraph{Coarse gate-regime statistic.}
For SwiGLU FFN blocks, the characteristic points of the SiLU effective-homogeneity curve give a finite symmetric threshold $\tau_z=1.27846$.  We therefore partition gate pre-activations as
\begin{equation}
\label{eq:regime_probs}
p_{\mathrm{sup}} = \mathbb{E}\!\left[\mathbb{I}(z<-\tau_z)\right],\quad
p_{\mathrm{res}} = \mathbb{E}\!\left[\mathbb{I}(|z|\le \tau_z)\right],\quad
p_{\mathrm{pos}} = \mathbb{E}\!\left[\mathbb{I}(z>\tau_z)\right],
\end{equation}
where the expectation is over batch, token, and channel dimensions.  The three regions correspond to suppressive or negative gate-contribution states, responsive homogeneity-amplifying states, and positive non-amplifying states.

The threshold is activation-specific rather than a universal design constant.  For ReLU FFNs, such as \texttt{t5-base v1.0}, ReLU can be written as $\phi(z)=z\mathbb{I}(z>0)$ with a hard sign/Heaviside-type gate and a single characteristic point at $z=0$; the responsive band collapses, so $p_{\mathrm{res}}$ is not available.  For GELU FFNs, such as CLIP-ViT-B/16, the self-gate can be viewed as a smooth sigmoid-type gate without finite suppressive and positive boundaries under our coarse statistic; equivalently, the boundary is degenerate.  In both cases we disable homogeneity-dynamics step scaling and keep the sensitivity mask as the active NA-LoRA component.

We summarize the block state by
\begin{equation}
\label{eq:regime_score}
a=\mathrm{clamp}\big(p_{\mathrm{res}}-\lambda_{\mathrm{pos}}p_{\mathrm{pos}},\,0,\,1\big),
\end{equation}
where $\lambda_{\mathrm{pos}}>0$ penalizes dominance by the positive non-amplifying region.  A larger $a$ means that more mass lies in the responsive band, where additional LoRA motion is expected to be useful.

\paragraph{Projection-wise LoRA scaling.}
For each FFN projection $\star\in\{\mathrm{up},\mathrm{gate},\mathrm{down}\}$, we compute
\begin{equation}
\label{eq:step_scale}
s_\star^{\mathrm{new}}=
\mathrm{clip}\big(1+\alpha_\star(2a-1),\ s_{\min}^{\star},\ s_{\max}^{\star}\big),
\end{equation}
and optionally smooth it by
\begin{equation}
\label{eq:scale_ema}
s_{\star,t}=\beta_s s_{\star,t-1}+(1-\beta_s)s_\star^{\mathrm{new}}.
\end{equation}
The scale is applied only to the LoRA parameters of the corresponding projection:
\begin{equation}
\label{eq:lora_only_scale}
\nabla\theta_\star \leftarrow s_{\star,t}\nabla\theta_\star.
\end{equation}
The frozen backbone and forward computation remain unchanged.  This scaling should be interpreted as coarse block-level modulation: it does not prevent regime crossings and does not define instability by crossing frequency alone.  Rather, it works with the channel mask to concentrate adaptation on useful nonlinear states while avoiding unstructured amplification of the entire FFN adapter.

\paragraph{Summary.}
NA-LoRA solves a rank-allocation problem, and when the activation supports it, a trajectory-shaping problem in FFNs.  The mask increases the share of low-rank update capacity assigned to responsive gate channels, and the step scale adjusts FFN LoRA update speed using activation-specific effective-homogeneity regimes.  Together they promote structured nonlinear gate adaptation while preserving the simplicity and inference cost of standard LoRA.  The complete training-time procedure and hook-level details are provided in Appendix~\ref{app:impl}.

\section{Experiments}
\label{sec:exp}

\paragraph{Setup and naming.}
Following the experimental taxonomy of LoRA-GA and LoRA-Pro~\cite{wang2024loraga,wang2024lorapro}, we evaluate NA-LoRA on natural language understanding, large-language-model adaptation, and image classification. The main text focuses on Llama-3.1-8B-Base~\cite{dubey2024llama} and Llama-2-7B-Base~\cite{touvron2023llama} instruction tuning, including dialogue generation, mathematical reasoning, and code generation with WizardLM~\cite{xu2024wizardlm}, MetaMathQA~\cite{yu2024metamath}, and CodeFeedback~\cite{zheng2024opencodeinterpreter}. We evaluate these tasks with MT-Bench~\cite{zheng2024judging}, GSM8K~\cite{cobbe2021training}, and HumanEval~\cite{chen2021evaluating}. T5-Base/GLUE and CLIP-ViT-B/16 transfer results are reported in Appendix~\ref{app:add_exp}. Full details and hyperparameters are in Appendix~\ref{app:exp_details}. Unless stated otherwise, rank is $r=8$, all non-head linear modules are adapted, and NA-LoRA applies nonlinearity-aware controls only to FFN projections.

\paragraph{Baselines.}
We compare with full fine-tuning and representative PEFT baselines: LoRA~\cite{hu2022lora}, rsLoRA~\cite{kalajdzievski2023rank}, AdaLoRA~\cite{zhang2023adaptive}, DoRA~\cite{liu2024dora}, LoRA+~\cite{hayou2024lora+}, PiSSA~\cite{meng2024pissa}, LoRA-GA~\cite{wang2024loraga}, LoRA-Pro~\cite{wang2024lorapro}, and GoRA~\cite{heGoRAGradientdrivenAdaptive2025}.

\begin{table*}[t]
  \caption{\textbf{Large-language-model results.} Dialogue generation, mathematical reasoning, and code generation are evaluated by MT-Bench, GSM8K, and HumanEval, respectively.}
  \label{tab:results_large_combined}
  \centering
  \begin{small}
  \setlength{\tabcolsep}{3.7pt}
  \begin{tabular}{lcccccc}
    \toprule
    & \multicolumn{3}{c}{\textbf{Llama-3.1-8B-Base}} & \multicolumn{3}{c}{\textbf{Llama-2-7B-Base}} \\
    \cmidrule(lr){2-4} \cmidrule(lr){5-7}
    Method
    & \textbf{MT-Bench} & \textbf{GSM8K} & \textbf{HumanEval}
    & \textbf{MT-Bench} & \textbf{GSM8K} & \textbf{HumanEval} \\
    \midrule
    Full
    & 5.88$\pm$0.23 & 73.69$\pm$0.28 & 51.63$\pm$1.27
    & 5.30$\pm$0.11 & 59.36$\pm$0.85 & 35.31$\pm$2.13 \\
    \midrule
    LoRA
    & 6.08$\pm$0.02 & 71.52$\pm$0.73 & 41.05$\pm$2.85
    & 5.61$\pm$0.10 & 42.08$\pm$0.04 & 14.76$\pm$0.17 \\
    rsLoRA
    & 6.16$\pm$0.02 & 73.85$\pm$0.23 & 42.27$\pm$3.46
    & 5.25$\pm$0.03 & 45.62$\pm$0.10 & 16.01$\pm$0.79 \\
    AdaLoRA
    & 6.18$\pm$0.16 & 73.74$\pm$0.71 & 43.49$\pm$1.63
    & 5.57$\pm$0.05 & 50.72$\pm$1.39 & 17.80$\pm$0.44 \\
    DoRA
    & \underline{6.32$\pm$0.10} & 74.65$\pm$1.19 & 45.12$\pm$2.44
    & \textbf{5.97$\pm$0.02} & 53.07$\pm$0.75 & 19.75$\pm$0.41 \\
    LoRA+
    & 6.27$\pm$0.02 & 74.42$\pm$0.88 & 43.09$\pm$2.85
    & 5.71$\pm$0.08 & 52.11$\pm$0.62 & 18.17$\pm$0.52 \\
    PiSSA
    & 6.10$\pm$0.01 & 73.59$\pm$1.11 & 43.70$\pm$4.47
    & 5.30$\pm$0.02 & 44.54$\pm$0.27 & 16.02$\pm$0.17 \\
    LoRA-GA
    & 6.00$\pm$0.05 & 74.30$\pm$1.06 & 44.10$\pm$2.85
    & \underline{5.95$\pm$0.16} & 53.60$\pm$0.30 & 19.81$\pm$1.46 \\
    LoRA-Pro
    & 6.21$\pm$0.06 & \underline{74.93$\pm$1.01} & 43.63$\pm$1.49
    & 5.72$\pm$0.03 & \underline{57.57$\pm$0.50} & 22.97$\pm$0.35 \\
    GoRA
    & 6.30$\pm$0.02 & 74.55$\pm$1.04 & \underline{45.32$\pm$1.62}
    & 5.61$\pm$0.12 & 54.04$\pm$0.22 & \underline{24.80$\pm$1.04} \\
    \midrule
    NA-LoRA$_{r=8}$
    & \textbf{6.39$\pm$0.04} & \textbf{75.16$\pm$1.34} & \textbf{46.34$\pm$1.22}
    & 5.92$\pm$0.03 & \textbf{57.97$\pm$3.08} & \textbf{26.02$\pm$0.81} \\
    NA-LoRA$_{r=32}$
    & 6.20$\pm$0.02 & 75.33$\pm$1.42 & 47.13$\pm$1.01
    & 5.72$\pm$0.04 & 59.46$\pm$0.78 & 27.64$\pm$0.41 \\
    NA-LoRA$_{r=128}$
    & 6.12$\pm$0.04 & 76.06$\pm$0.48 & 47.76$\pm$1.02
    & 5.78$\pm$0.02 & 59.74$\pm$1.13 & 28.66$\pm$2.44 \\
    \bottomrule
  \end{tabular}
  \par\vspace{4pt}
  \begin{minipage}{0.92\textwidth}
  \footnotesize\raggedright All PEFT methods adapt non-head linear layers under matched protocols (details in Appendix~\ref{app:exp_details}); NA-LoRA's nonlinearity-aware controls apply only to FFN projections.
  \end{minipage}
  \end{small}
\end{table*}

\subsection{Results on Large Language Models}
\label{subsec:llm_results}

\paragraph{Main results.}
Table~\ref{tab:results_large_combined} summarizes the LLM results. Under the matched rank-$8$ setting, NA-LoRA obtains the best PEFT results on all three Llama-3.1-8B-Base tasks, improving vanilla LoRA by $0.31$, $3.64$, and $5.29$ points on MT-Bench, GSM8K, and HumanEval. It also outperforms strong LoRA variants that improve scaling, initialization, rank allocation, or gradient alignment, suggesting that FFN nonlinear selection provides complementary information to existing weight- or gradient-space designs.

On Llama-2-7B-Base, NA-LoRA achieves the best rank-$8$ GSM8K and HumanEval scores while remaining close to the best MT-Bench result. The gains are more pronounced on reasoning and code-generation tasks, where FFN-mediated feature recombination is likely more important. Overall, NA-LoRA consistently improves the low-rank frontier without changing inference-time computation.

\noindent\textbf{Rank scaling.}
Larger ranks further improve GSM8K and HumanEval in several cases, but the trend is not monotonic for MT-Bench. This indicates that NA-LoRA is not simply a capacity-increasing method: increasing rank also changes the strength of gate-selection modulation, which may benefit reasoning and code generation but does not necessarily improve open-ended dialogue evaluation. We therefore use $r=8$ for the main fair comparison.

\noindent\textbf{FFN-specific evidence.}
Since NA-LoRA modifies only FFN LoRA updates, we further examine whether the gains are tied to FFN adaptation rather than generic module-wise capacity. Appendix~\ref{app:add_exp} shows that attention-only LoRA substantially underperforms all-linear LoRA on GSM8K, whereas FFN-only LoRA recovers most of the all-linear performance and remains competitive on HumanEval. Under the same FFN-only target-module setting, NA-LoRA further improves over LoRA. This supports the central design choice of applying nonlinearity-aware control to FFN projections, where gate activations directly determine channel selection.

\paragraph{Gate-selection and effective-homogeneity dynamics.}
To examine whether NA-LoRA changes the intended nonlinear behavior, we record FFN gate pre-activations from \texttt{mlp.gate\_proj} during MetaMathQA fine-tuning of Llama-3.1-8B. Diagnostics are computed on a fixed GSM8K-train probe subset by comparing adjacent checkpoints saved every 250 steps. The cumulative raw sigmoid-gate movement $D_{\mathrm{sel}}(t)$ is larger for NA-LoRA than LoRA (0.3744 vs. 0.3199), so NA-LoRA should not be interpreted as suppressing gate movement. Instead, its effect is to make gate movement more concentrated in behaviorally meaningful nonlinear regimes.

For SiLU, we compute $\kappa_\phi(z)=z\phi'(z)/(\phi(z)+10^{-6})$ and partition gate scores with $\tau_z=1.27846$. Figures~\ref{fig:effective_homogeneity_diagnostics}--\ref{fig:effective_homogeneity_layerwise} show that NA-LoRA yields larger cumulative regime flips (0.1609 vs. 0.1421), larger $\kappa_\phi$ drift (1.0896 vs. 0.9349), and larger suppressive-regime crossings (0.1449 vs. 0.1279). These results support the proposed mechanism: NA-LoRA induces stronger effective-homogeneity modulation rather than merely smaller raw displacement. Meanwhile, severe regime jumps remain nearly zero for both methods ($1.10\times10^{-7}$ for LoRA and $3.36\times10^{-7}$ for NA-LoRA), indicating that the stronger modulation is structured rather than unstable.

\noindent\textbf{Target modules and transfer.}
Additional T5/GLUE and CLIP-ViT-B/16 results in Appendix~\ref{app:add_exp} further test activation-specific boundary cases where homogeneity step scaling is disabled, showing that the temporal-importance mask remains useful beyond SwiGLU FFNs.

\begin{figure}[t]
    \centering
    \includegraphics[width=\linewidth]{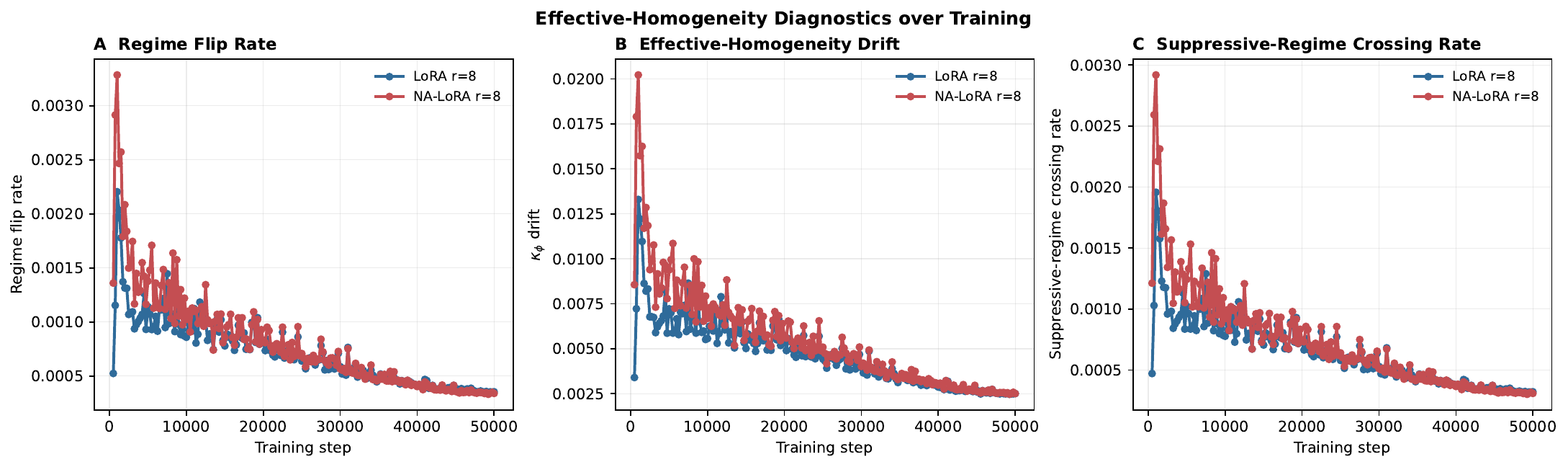}
    \caption{\textbf{Effective-homogeneity diagnostics over training.} Adjacent-checkpoint regime flip rate, $\kappa_\phi$ drift, and suppressive-regime crossing rate on the fixed GSM8K-train probe subset.}
    \label{fig:effective_homogeneity_diagnostics}
\end{figure}

\begin{figure}[t]
    \centering
    \includegraphics[width=\linewidth]{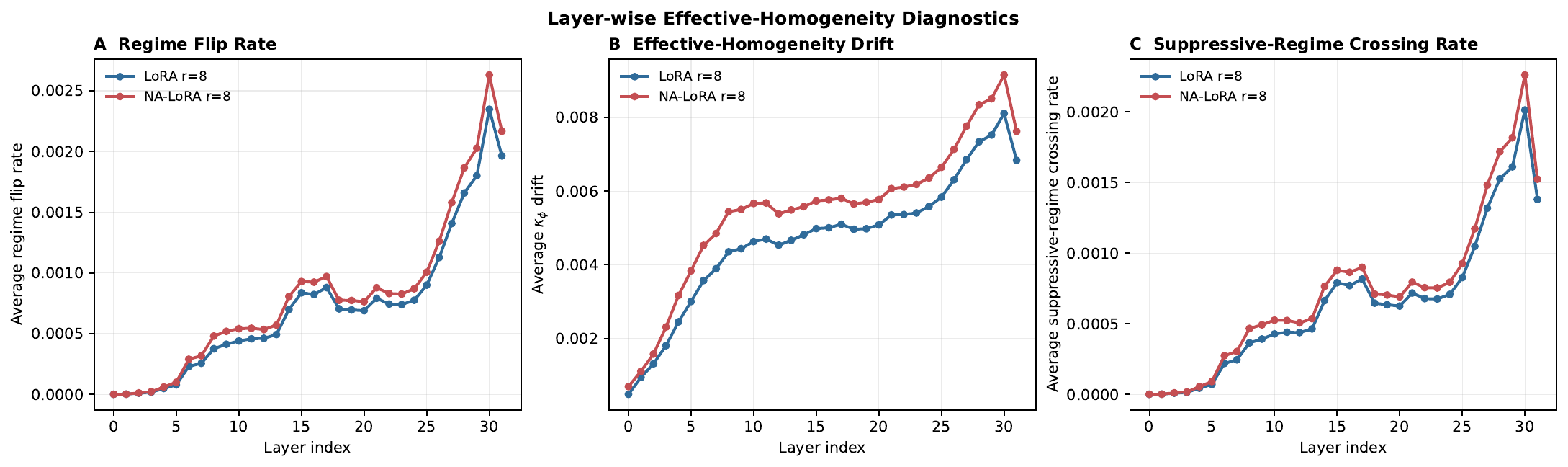}
    \caption{\textbf{Layer-wise effective-homogeneity diagnostics.} Layer averages of the three diagnostics in Figure~\ref{fig:effective_homogeneity_diagnostics}.}
    \label{fig:effective_homogeneity_layerwise}
\end{figure}

\begin{table}[H]
  \caption{\textbf{Ablation on Llama-3.1-8B-Base ($r{=}8$).}}
  \label{tab:ablation3}
  \centering
  \begin{small}
    \begin{tabular}{lcc}
      \toprule
      Method & \textbf{GSM8K} & \textbf{HumanEval} \\
      \midrule
      FT & 73.69$\pm$0.28 & 51.63$\pm$1.27 \\
      LoRA & 71.52$\pm$0.73 & 41.05$\pm$2.85 \\
      rsLoRA & 73.85$\pm$0.23 & 42.27$\pm$3.46 \\
      NA-LoRA & \textbf{75.16$\pm$1.34} & \textbf{46.34$\pm$1.22} \\
      NA-LoRA \textnormal{w/o} channel mask & 74.53$\pm$1.82 & \underline{45.53$\pm$1.42} \\
      NA-LoRA \textnormal{w/o} scaling step & 74.43$\pm$0.63 & 44.31$\pm$5.69 \\
      \bottomrule
    \end{tabular}
  \end{small}
\end{table}

\paragraph{Ablation study.}
Table~\ref{tab:ablation3} isolates the two NA-LoRA components on Llama-3.1-8B-Base with $r=8$. Removing either the temporal-importance mask or the homogeneity-dynamics step scaling weakens performance on at least one metric. This suggests that the two controls play complementary roles: the mask determines where gate-related low-rank updates should be expressed, while the scaling step controls how fast FFN LoRA parameters move through nonlinear regimes. The larger HumanEval variance without scaling further suggests that trajectory control helps stabilize code-generation adaptation.

\section{Conclusion}
\label{sec:conclusion}

We revisited the LoRA--full-fine-tuning gap through an evaluation--decision view of self-gated FFNs.  The key issue is that update-space approximation alone does not ensure behavioral alignment: low-rank residuals can alter the activation-induced score-to-weight recalibration that decides channel usage.  NA-LoRA addresses this selection misalignment with two training-only controls: a derivative-based temporal mask and, when the activation permits, a regime-aware homogeneity-dynamics scale, while leaving inference identical to standard LoRA.  Experiments and gate-dynamics diagnostics show that behavior-aware, effective-homogeneity-sensitive modulation is a useful complement to weight-, gradient-, and rank-centric PEFT design.

\bibliographystyle{iclr2025_conference}
\bibliography{ref}

\appendix
\onecolumn

\section{Experimental Details}
\label{app:exp_details}

\paragraph{Data and models.}
We follow the naming and setup of LoRA-GA/LoRA-Pro-style evaluations~\cite{wang2024lorapro,wang2024loraga,heGoRAGradientdrivenAdaptive2025}: natural language understanding uses T5-Base~\cite{raffel2020exploring} on GLUE~\cite{wang2019glue}, large-language-model experiments include dialogue generation, mathematical reasoning, and code generation on Llama models, and image classification uses CLIP-ViT-B/16 transfer~\cite{radford2021learning}.  The main Llama experiments use 100K MetaMathQA examples for mathematical reasoning~\cite{yu2024metamath}, 100K CodeFeedback examples for code generation~\cite{zheng2024opencodeinterpreter}, and a 52K WizardLM subset for dialogue generation~\cite{xu2024wizardlm}.  The main backbones are Llama-3.1-8B-Base~\cite{dubey2024llama} and Llama-2-7B-Base~\cite{touvron2023llama}.

\paragraph{Adapters and optimization.}
Unless otherwise stated, all PEFT methods adapt all linear modules except embeddings, normalization layers, and output heads.  For attention projections, NA-LoRA uses the vanilla LoRA update; the temporal-importance mask and homogeneity-dynamics scaling are applied only to FFN projections.  When the activation-specific coarse regime statistic degenerates, as in ReLU-FFN T5 v1.0 or GELU-based CLIP-ViT, homogeneity-dynamics scaling is disabled.  We use AdamW~\cite{loshchilov2019decoupled} and match the learning-rate schedule, batch size, sequence length, and epochs of the LoRA-Pro protocol~\cite{wang2024lorapro}, which follows LoRA-GA-style settings~\cite{wang2024loraga}.  Unless otherwise stated, NA-LoRA uses learning rate $2\times10^{-5}$.

\paragraph{Evaluation and reporting.}
We follow prior work for MT-Bench judging~\cite{zheng2024judging}, GSM8K accuracy~\cite{cobbe2021training}, HumanEval pass@1~\cite{chen2021evaluating}, and mean$\pm$std reporting over seeds; MT-Bench responses are judged using GPT-4~\cite{achiam2023gpt}.  All main experiments are run on $4\times$A100 GPUs.  The additional GLUE and CLIP transfer settings are described with their corresponding tables below.

\section{Additional Experimental Results}
\label{app:add_exp}

\paragraph{Effect of target modules.}
Table~\ref{tab:target_modules} varies the adapted module group on Llama-3.1-8B-Base at rank $r=8$.  Attention-only LoRA degrades GSM8K relative to all-linear LoRA, while FFN-only LoRA recovers most of the math performance and remains competitive on code.  Replacing FFN-only LoRA with NA-LoRA further improves both GSM8K and HumanEval under the same target-module setting, supporting the claim that nonlinearity-aware modulation is most useful where FFN gates directly determine selection dynamics.

\begin{table}[H]
  \caption{\textbf{Target-module ablation on Llama-3.1-8B-Base ($r{=}8$).}}
  \label{tab:target_modules}
  \centering
  \begin{small}
    \begin{tabular}{llcc}
      \toprule
      Method & Target Modules & \textbf{GSM8K} & \textbf{HumanEval} \\
      \midrule
      LoRA & All linear & 71.52$\pm$0.73 & 41.05$\pm$2.85 \\
      LoRA & Attention only & 67.78$\pm$1.25 & \underline{43.09$\pm$0.35} \\
      LoRA & FFN only & \underline{71.97$\pm$1.23} & 42.07$\pm$0.61 \\
      NA-LoRA & FFN only & \textbf{73.34$\pm$0.63} & \textbf{44.31$\pm$2.64} \\
      \bottomrule
    \end{tabular}
  \end{small}
\end{table}

\subsection{Results on Natural Language Understanding Tasks}\label{subsec:t5_glue}
\paragraph{T5-Base on GLUE.}
Table~\ref{tab:t5} reports five GLUE sub-tasks~\cite{wang2019glue} using T5-Base~\cite{raffel2020exploring}.  Prior LoRA-GA/LoRA-Pro/GoRA evaluations use \texttt{t5-base v1.0}, whose FFN uses ReLU rather than the SwiGLU gates used in the main Llama analysis; gated GeGLU is adopted in \texttt{t5-base v1.1}.  Under the self-gate rewriting $\mathrm{ReLU}(z)=z\mathbb{I}(z>0)$, the gate is a hard sign/Heaviside function with only the characteristic point $z=0$.  The responsive interval therefore collapses and there is no coarse $p_{\mathrm{res}}$ statistic.  For a protocol-compatible comparison, \textbf{NA-LoRA*} disables homogeneity-dynamics step scaling and applies only the temporal-importance mask.  The strong results in this activation-specific boundary case suggest that sensitivity-based channel allocation remains useful even when the coarse regime statistic is unavailable.

\begin{table}[H]
  \caption{\textbf{T5-Base on GLUE.} NA-LoRA* is mask-only because ReLU FFNs do not provide a non-degenerate coarse-regime statistic.}
  \label{tab:t5}
  \centering
  \begin{small}
  \begin{tabular}{lcccccc}
    \toprule
    Method & \textbf{MNLI} & \textbf{SST-2} & \textbf{CoLA} & \textbf{QNLI} & \textbf{MRPC} & \textbf{Average} \\
    \midrule
    Full & 86.33$\pm$0.00 & 94.75$\pm$0.21 & 80.70$\pm$0.24 & 93.19$\pm$0.22 & 84.56$\pm$0.73 & 87.91 \\
    \midrule
    LoRA & 85.30$\pm$0.04 & 94.04$\pm$0.11 & 69.35$\pm$0.05 & 92.96$\pm$0.09 & 68.38$\pm$0.01 & 82.08 \\
    PiSSA & 85.75$\pm$0.07 & 94.07$\pm$0.06 & 74.27$\pm$0.39 & 93.15$\pm$0.14 & 76.31$\pm$0.51 & 84.71 \\
    rsLoRA & 85.73$\pm$0.10 & \underline{94.19$\pm$0.23} & 72.32$\pm$1.12 & 93.12$\pm$0.09 & 52.86$\pm$2.27 & 79.64 \\
    LoRA+ & 85.81$\pm$0.09 & 93.85$\pm$0.24 & 77.53$\pm$0.20 & 93.14$\pm$0.03 & 74.43$\pm$1.39 & 84.95 \\
    DoRA & 85.67$\pm$0.09 & 94.04$\pm$0.53 & 72.04$\pm$0.94 & 93.04$\pm$0.06 & 68.08$\pm$0.51 & 82.57 \\
    AdaLoRA & 85.45$\pm$0.11 & 93.69$\pm$0.20 & 69.16$\pm$0.24 & 91.66$\pm$0.05 & 68.14$\pm$0.28 & 81.62 \\
    LoRA-GA & 85.70$\pm$0.09 & 94.11$\pm$0.18 & 80.57$\pm$0.20 & 93.18$\pm$0.06 & 85.29$\pm$0.24 & 87.77 \\
    LoRA-Pro & \underline{86.03$\pm$0.19} & \underline{94.19$\pm$0.13} & \textbf{81.94$\pm$0.24}
    & \underline{93.42$\pm$0.05} & \underline{86.60$\pm$0.14} & \textbf{88.44} \\
    GoRA & 85.91$\pm$0.02 & \textbf{94.68$\pm$0.43} & 79.86$\pm$0.35 & 93.27$\pm$0.08 & 86.10$\pm$0.20 & 87.96 \\
    NA-LoRA* & \textbf{86.29$\pm$0.10} & 93.96$\pm$0.07 & \underline{81.46$\pm$0.13} & \textbf{93.44$\pm$0.06} & \textbf{87.01$\pm$0.00} & \underline{88.43} \\
    \bottomrule
  \end{tabular}
  \end{small}

\end{table}

\subsection{Results on Image Classification Tasks}\label{subsec:clip_transfer_results}
\paragraph{CLIP-ViT-B/16 transfer.}
Table~\ref{tab:clip_transfer} follows the LoRA-Pro image-classification protocol~\cite{wang2024lorapro} with CLIP-ViT-B/16~\cite{radford2021learning} transfer on Cars~\cite{krause20133d}, DTD~\cite{cimpoi2014describing}, EuroSAT~\cite{helber2019eurosat}, GTSRB~\cite{houben2013detection}, RESISC45~\cite{cheng2017remote}, SUN397~\cite{xiao2010sun}, and SVHN~\cite{netzer2011reading}.  CLIP-ViT-B/16 uses GELU FFNs; under a self-gate rewriting, GELU behaves as a smooth sigmoid-type gate and does not provide finite suppressive and positive boundaries for the coarse statistic used in Eq.~\eqref{eq:regime_probs}.  We therefore disable homogeneity-dynamics step scaling for CLIP-ViT and use the temporal-importance mask as the active NA-LoRA component.  NA-LoRA achieves the best average accuracy, indicating that the nonlinearity-aware allocation principle remains effective in this activation-specific boundary case rather than requiring all activations to share the same regime structure.

\begin{table}[H]
  \caption{\textbf{CLIP-ViT-B/16 transfer.} Accuracy on seven image-classification datasets under the LoRA-Pro protocol.}
  \label{tab:clip_transfer}
  \centering
  \begin{small}
  \resizebox{\textwidth}{!}{%
  \begin{tabular}{lcccccccc}
    \toprule
    Method & \textbf{Cars} & \textbf{DTD} & \textbf{EuroSAT} & \textbf{GTSRB} & \textbf{RESISC45} & \textbf{SUN397} & \textbf{SVHN} & \textbf{Average} \\
    \midrule
    Zero-shot & 63.75 & 44.39 & 42.22 & 35.22 & 56.46 & 62.56 & 15.53 & 45.73 \\
    Full FT & 84.23$\pm$0.06 & 77.44$\pm$0.19 & 98.09$\pm$0.03 & 94.31$\pm$0.28 & 93.95$\pm$0.00 & 75.35$\pm$0.10 & 93.04$\pm$0.18 & 88.06 \\
    \midrule
    LoRA & 72.81$\pm$0.13 & 73.92$\pm$0.38 & 96.93$\pm$0.07 & 92.40$\pm$0.10 & 90.03$\pm$0.14 & 70.12$\pm$0.18 & 88.02$\pm$0.07 & 83.46 \\
    rsLoRA & 82.38$\pm$0.20 & 78.03$\pm$0.76 & 98.06$\pm$0.08 & 95.04$\pm$0.11 & 93.96$\pm$0.18 & 75.38$\pm$0.24 & 92.74$\pm$0.18 & 87.94 \\
    LoRA+ & 72.87$\pm$0.18 & 74.07$\pm$0.45 & 97.01$\pm$0.02 & 92.42$\pm$0.18 & 89.96$\pm$0.11 & 70.17$\pm$0.15 & 88.08$\pm$0.05 & 83.51 \\
    DoRA & 73.72$\pm$0.06 & 73.72$\pm$0.33 & 96.95$\pm$0.01 & 92.38$\pm$0.17 & 90.03$\pm$0.08 & 70.20$\pm$0.19 & 88.23$\pm$0.05 & 83.48 \\
    LoRA-GA & 85.18$\pm$0.41 & 77.50$\pm$0.12 & 98.05$\pm$0.27 & 95.28$\pm$0.10 & 94.43$\pm$0.19 & 75.44$\pm$0.06 & 93.68$\pm$0.35 & 88.51 \\
    LoRA-Pro & \textbf{85.87$\pm$0.08} & \textbf{78.64$\pm$0.25} & 98.46$\pm$0.03 & 95.66$\pm$0.05 & 94.75$\pm$0.21 & \underline{76.42$\pm$0.14} & 94.63$\pm$0.20 & 89.20 \\
    GoRA & \underline{85.76$\pm$0.19} & 78.17$\pm$0.32 & \underline{98.77$\pm$0.35} & \textbf{96.66$\pm$0.36} & \underline{95.16$\pm$0.26} & \textbf{76.46$\pm$0.08} & \underline{95.32$\pm$0.13} & \underline{89.47} \\
    \midrule
    NA-LoRA & 85.28$\pm$0.20 & \underline{78.26$\pm$0.77} & \textbf{98.96$\pm$0.04} & \underline{96.14$\pm$0.46} & \textbf{95.57$\pm$0.11} & 75.84$\pm$0.16 & \textbf{97.14$\pm$0.04} & \textbf{89.60} \\
    \bottomrule
  \end{tabular}%
  }
  \end{small}
\end{table}

\section{Notation and Conceptual Mapping}
\label{app:notation}

This appendix summarizes the notation and formalizes the Multi-Criteria Decision Making (MCDM) mapping introduced in Section~\ref{subsec:selection_view}.

\subsection{Standard Notation}
We denote input representations by $x \in \mathbb{R}^d$. A standard Transformer FFN block (SwiGLU variant) is defined as:
\begin{equation}
    u = W_u x, \quad z = W_g x, \quad h = \phi(z) \odot u, \quad y = W_d h,
\end{equation}
where $\phi(\cdot)$ is a self-gated nonlinearity. We denote the LoRA rank by $r$ and write a low-rank update as $BA$, absorbing the usual LoRA scaling factor into $B$ or $A$ for notational simplicity.

\subsection{MCDM Mapping: View vs. Selection}
\label{app:mcdm_mapping}

As defined in Definition~\ref{def:selection_weights}, we decompose the nonlinearity $\phi$ into a selection weight $s(z)$ and the value $z$.  Table~\ref{tab:mcdm_mapping} summarizes the terminology used by the main text.
\begin{table}[H]
    \centering
    \caption{MCDM mapping used for the self-gated FFN notation.}
    \label{tab:mcdm_mapping}
    \begin{small}
    \begin{tabular}{l|l|l}
        \toprule
        \textbf{FFN Component} & \textbf{Symbol} & \textbf{MCDM Interpretation} \\
        \midrule
        Input State & $x$ & Context / Environment \\
        Linear Projection & $u = W_u x$ & \textbf{Candidate Generation} (The ``View") \\
        Gate Pre-activation & $z = W_g x$ & \textbf{Evaluation Score} (Criteria) \\
        Nonlinearity & $s_i(z)=\varrho(z_i)$ & \textbf{Decision Weighting} (Selection Policy) \\
        Output & $h = \phi(z)\odot u=(s(z)\odot z)\odot u$ & \textbf{Reweighted Utility} \\
        \bottomrule
    \end{tabular}
    \end{small}
\end{table}

This table fixes terminology used by the main text; the formal mismatch decomposition is proved next.

\section{Proofs: Exact Selection-Mismatch Decomposition}
\label{app:selection_drift}

This section proves Proposition~\ref{prop:exact_selection_decomp} and Corollary~\ref{cor:selection_amplification}.

\subsection{Proof of Proposition~\ref{prop:exact_selection_decomp}}
\begin{proof}
Let $s_{\mathrm{tar}}=s(z_{\mathrm{tar}})$ and $s_{\mathrm{lora}}=s(z_{\mathrm{lora}})$.
By definition,
\[
\mathcal{U}(z_{\mathrm{tar}},u_{\mathrm{tar}})
-\mathcal{U}(z_{\mathrm{lora}},u_{\mathrm{lora}})
=
\phi(z_{\mathrm{tar}})\odot u_{\mathrm{tar}}
-
\phi(z_{\mathrm{lora}})\odot u_{\mathrm{lora}}.
\]
Add and subtract $\phi(z_{\mathrm{tar}})\odot u_{\mathrm{lora}}$:
\[
=
\big(\phi(z_{\mathrm{tar}})-\phi(z_{\mathrm{lora}})\big)\odot u_{\mathrm{lora}}
+
\phi(z_{\mathrm{tar}})\odot (u_{\mathrm{tar}}-u_{\mathrm{lora}}).
\]
Using $\phi(z)=s(z)\odot z$,
\[
\phi(z_{\mathrm{tar}})-\phi(z_{\mathrm{lora}})
=
s_{\mathrm{tar}}\odot z_{\mathrm{tar}} - s_{\mathrm{lora}}\odot z_{\mathrm{lora}}
=
s_{\mathrm{tar}}\odot(z_{\mathrm{tar}}-z_{\mathrm{lora}})
+
(s_{\mathrm{tar}}-s_{\mathrm{lora}})\odot z_{\mathrm{lora}}.
\]
Substituting $\Delta z=z_{\mathrm{tar}}-z_{\mathrm{lora}}$ and $\Delta u=u_{\mathrm{tar}}-u_{\mathrm{lora}}$ yields Eq.~\eqref{eq:exact_selection_decomp}.
\end{proof}

\subsection{Proof of Corollary~\ref{cor:selection_amplification}}
\begin{proof}
Apply triangle inequality to Eq.~\eqref{eq:exact_selection_decomp} and bound each Hadamard product by the corresponding infinity norm factor.
\end{proof}
\subsection{Proof of Proposition~\ref{prop:sensitivity_weighted_mismatch}}
\begin{proof}
For each channel $c$, apply the mean-value theorem to the scalar function $s$ on the segment between $z_{\mathrm{lora},c}$ and $z_{\mathrm{tar},c}$.
There exists $\bar z_c$ on this segment such that
\[
s(z_{\mathrm{tar},c})-s(z_{\mathrm{lora},c})=s'(\bar z_c)(z_{\mathrm{tar},c}-z_{\mathrm{lora},c})=s'(\bar z_c)\Delta z_c.
\]
Multiplying both sides by $z_{\mathrm{lora},c}u_{\mathrm{lora},c}$ yields Eq.~\eqref{eq:sensitivity_weighted_mismatch}; squaring and summing over channels yields Eq.~\eqref{eq:sensitivity_weighted_residual}.
\end{proof}

\subsection{Why the bound is data-aware}
\label{proof:weight_vs_selection}

Corollary~\ref{cor:selection_amplification} should not be read as claiming that every small matrix residual causes a large functional error.  If inputs, utilities, and gate Lipschitz constants are uniformly bounded, then sufficiently small residuals also control selection change.  The point is that any such statement is necessarily \emph{data-aware}: it depends on the directions $E_gx$, the local sensitivity of $s$, and the magnitudes of $z_{\mathrm{lora}}$ and $u_{\mathrm{lora}}$.  Standard weight-space objectives control only $E_g$ and therefore miss the channel-wise weighting appearing in Eq.~\eqref{eq:sensitivity_weighted_residual}.  NA-LoRA is designed to estimate this missing weighting from the observed gate dynamics.

\section{Local Scaling Sensitivity and Practical Proxy}
\label{app:local_homogeneity}

This section provides the proofs and implementation-level justification for the effective-homogeneity and sensitivity statistics used by NA-LoRA.

\subsection{Proof of Proposition~\ref{prop:local_scaling_main}}
\label{proof:homogeneity}

\begin{proof}
Fix a channel and abbreviate $z=z_c(x)$ and $u=u_c(x)$.  Along the scaling ray $x\mapsto \alpha x$, the linear projections satisfy $z_c(\alpha x)=\alpha z$ and $u_c(\alpha x)=\alpha u$.  Define
\[
g(\alpha)=h_c(\alpha x)=\phi(\alpha z)(\alpha u).
\]
The logarithmic elasticity of $g$ at $\alpha=1$ is
\[
\left.\frac{\partial \log |g(\alpha)|}{\partial \log \alpha}\right|_{\alpha=1}
=
\left.\frac{\alpha}{g(\alpha)}\frac{\partial g(\alpha)}{\partial \alpha}\right|_{\alpha=1}.
\]
Using the chain rule,
\[
\frac{\partial g(\alpha)}{\partial \alpha}
= z\phi'(\alpha z)(\alpha u)+\phi(\alpha z)u.
\]
Evaluating at $\alpha=1$ gives
\[
\frac{1}{\phi(z)u}\big(z\phi'(z)u+\phi(z)u\big)
=1+\frac{z\phi'(z)}{\phi(z)}.
\]
With the numerical regularizer in Definition~\ref{def:kappa_main}, this is $1+\kappa_\phi(z)=\eh(z)$.
\end{proof}

\subsection{Activation-specific characteristic points}
The coarse regime statistic is derived from the activation's effective-homogeneity curve rather than assigned as a universal constant.  For SiLU, $\phi(z)=z\sigma(z)$ and, away from zero,
\begin{equation}
\label{eq:silu_kappa_appendix}
\kappa_\phi(z)=\frac{z\phi'(z)}{\phi(z)}=1+z(1-\sigma(z)).
\end{equation}
Solving $\kappa_\phi(z)=0$ gives $z\approx-1.27846$.  The derivative of Eq.~\eqref{eq:silu_kappa_appendix} is $(1-\sigma(z))(1-z\sigma(z))$, so the positive stationary point satisfies $z\sigma(z)=1$, yielding $z\approx1.27846$.  These two characteristic points motivate the symmetric SwiGLU threshold $\tau_z=1.27846$ used in the Llama regime statistic.

For ReLU, $\phi(z)=z\mathbb{I}(z>0)$ has a hard gate with a single discontinuity at $z=0$; the responsive interval collapses, so the three-way statistic in Eq.~\eqref{eq:regime_probs} is not used.  For GELU in CLIP-ViT-B/16, the self-gate is smooth and sigmoid-like, and our coarse suppressive/positive boundaries are not finite.  In both boundary cases, the experiments disable homogeneity-dynamics step scaling and use the temporal-importance mask only.  These cases test whether nonlinearity-aware allocation remains useful without assuming a shared SiLU-style regime structure.

\subsection{Proof of Proposition~\ref{prop:eh_drift_bound_main}}
\begin{proof}
Since $\eh(z)=1+\kappa_\phi(z)$, the assumed $L_\kappa$-Lipschitzness of $\kappa_\phi$ implies
\[
|\eh(z_{t+1,c})-\eh(z_{t,c})|
=|\kappa_\phi(z_{t+1,c})-\kappa_\phi(z_{t,c})|
\le L_\kappa |z_{t+1,c}-z_{t,c}|,
\]
which proves Eq.~\eqref{eq:eh_drift_bound_main}.

For the second statement, locally linearize the change in the gate projection under one LoRA optimization step.  Let $G_{g,c,t}$ denote the effective channel-wise gradient direction after accounting for the low-rank parameterization.  If the update is masked and scaled, then the induced first-order change in the gate score has the form
\[
\delta z_{t,c}(x)\approx -\eta s_t m_{t,c}\langle G_{g,c,t},x\rangle.
\]
Thus
\[
|\delta z_{t,c}(x)|^2
\le \eta^2 s_t^2 m_{t,c}^2 \|G_{g,c,t}\|^2\|x\|^2
\]
by Cauchy's inequality.  Combining this with the first part and taking expectation over $x$ yields Eq.~\eqref{eq:eh_drift_mask_scale_bound} up to the usual higher-order terms ignored by the local linearization.
\end{proof}

\subsection{Why derivative magnitude is a practical proxy}
\label{app:keff_choice}

The theoretical selection-mismatch term contains the derivative of the selection weight $s$.  NA-LoRA uses $k(z)=|\phi'(z)|$ instead because it is the quantity that directly modulates actual FFN gradients.  Consider a SwiGLU FFN channel
\[
u=W_u x,
\qquad z=W_g x,
\qquad h=\phi(z)\odot u,
\qquad y=W_dh,
\]
and let $\delta=\nabla_y\mathcal{L}$ be the upstream gradient.  With $g=W_d^\top\delta$,
\[
\nabla_u\mathcal{L}=g\odot\phi(z),
\qquad
\nabla_z\mathcal{L}=g\odot u\odot\phi'(z).
\]
Therefore
\begin{equation}
\label{eq:app_grad_up}
\nabla_{W_u}\mathcal{L}=(g\odot\phi(z))x^\top,
\end{equation}
\begin{equation}
\label{eq:app_grad_gate}
\nabla_{W_g}\mathcal{L}=(g\odot u\odot\phi'(z))x^\top.
\end{equation}
Equation~\eqref{eq:app_grad_gate} shows that the gate-projection update is directly gated by $\phi'(z)$.  When $|\phi'(z)|$ is small, gate updates have little immediate effect; when it is large, a small score perturbation can materially change the reweighted channel utility.

Using the self-gated decomposition $\phi(z)=s(z)z$, we have
\[
\phi'(z)=s'(z)z+s(z).
\]
Thus $\phi'(z)$ combines both the change of the decision weight and the current decision weight itself.  This is the relevant quantity for the FFN output because the model optimizes $\phi(z)\odot u$, not $s(z)$ in isolation.  We therefore use
\[
k(z)=|\phi'(z)|,
\qquad
k_{\mathrm{eff}}(c)=\mathbb{E}_{b,s}[k(z_{b,s,c})]
\]
as a stable, architecture-aligned proxy for constructing the temporal-importance mask.

\section{Margin-Based Temporal Bounds}
\label{app:stability_order}

This appendix provides the local temporal bounds referenced in Section~\ref{subsec:temporal_stability_main}.  These bounds justify step scaling as a way to shape one-step movement; they do not imply that NA-LoRA should minimize cumulative raw gate movement.

\begin{remark}[Local sign preservation away from the boundary]
For continuous decision scores, sign changes cannot occur under sufficiently small perturbations when the current margin is bounded away from zero.
\end{remark}
\subsection{Proof of Proposition~\ref{prop:margin_flip_main}}
\begin{proof}
If $\mathrm{sign}(D_{t+1}(x))\neq \mathrm{sign}(D_t(x))$, then either $|D_t(x)|\le \tau$ or $|D_{t+1}(x)-D_t(x)|>\tau$.
Otherwise, if both $|D_t(x)|>\tau$ and $|D_{t+1}(x)-D_t(x)|\le \tau$, then $D_{t+1}(x)$ cannot cross zero and must keep the same sign as $D_t(x)$.
Taking probabilities yields Eq.~\eqref{eq:margin_flip_bound}.
\end{proof}
\subsection{Proof of Corollary~\ref{cor:step_drift_control}}
\begin{proof}
By local Lipschitzness of $D_t(x)$ with respect to the LoRA parameters,
\[
|D_{t+1}(x)-D_t(x)|\le L_D(x)\|\theta_{t+1}-\theta_t\|.
\]
Using the update $\theta_{t+1}=\theta_t-\eta s_t g_t$ gives
\[
|D_{t+1}(x)-D_t(x)|\le \eta s_t L_D(x)\|g_t\|.
\]
Applying Markov's inequality to the squared nonnegative random variable $\eta^2s_t^2L_D(x)^2\|g_t\|^2$ yields Eq.~\eqref{eq:step_drift_bound}.
\end{proof}
\section{Implementation Details}
\label{app:impl}

\subsection{NA-LoRA Training-Time Procedure}
\label{app:nalora_algorithm}

Algorithm~\ref{alg:nalora} summarizes the full training-time procedure of NA-LoRA for non-degenerate SwiGLU-style gates.  It introduces no auxiliary objective and no inference-time modification; instead, it uses gate statistics observed during fine-tuning to improve how the low-rank update is allocated and scaled.  When the activation yields a degenerate coarse statistic, the regime-statistic and scaling lines are skipped and all $s_\star$ are set to $1$.

\begin{algorithm}[ht]
  \caption{NA-LoRA training-time procedure.}
  \label{alg:nalora}
  \begin{algorithmic}
    \STATE {\bfseries Input:} gate pre-activations $z_t$ (per FFN block), keep ratio $\rkeep$, mask sharpness $\beta$, mask EMA $\eta_m$,
    activation-specific regime specification $\tau_z$ when available (code argument \texttt{t}), positive-region penalty $\lambda_{\mathrm{pos}}$,
    step-scale parameters $\{\alpha_\star, s_{\min}^\star, s_{\max}^\star\}_{\star\in\{\mathrm{up},\mathrm{gate},\mathrm{down}\}}$,
    and scale EMA $\beta_s$.
    \STATE {\bfseries State:} mask buffer $m_{t-1}$ and scale buffers $\{s_{\star,t-1}\}$.

    \STATE {\bfseries (1) Temporal-importance channel mask}
    \STATE Compute channel scores $k_{\mathrm{eff}}(c)=\mathbb{E}_{b,s}[|\phi'(z_{t,b,s,c})|]$ for $c\in[d_h]$.
    \STATE Set $\tau_k \leftarrow \mathrm{Quantile}(k_{\mathrm{eff}},1-\rkeep)$.
    \STATE $m_c^{\mathrm{new}}\leftarrow \sigma\!\big(\beta(k_{\mathrm{eff}}(c)-\tau_k)\big)$ for all $c$.
    \STATE Update EMA mask: $m_t\leftarrow \eta_m m_{t-1}+(1-\eta_m)m^{\mathrm{new}}$.

    \STATE {\bfseries (2) Regime-aware statistics}
    \STATE $p_{\mathrm{sup}}\leftarrow \mathbb{E}[\mathbb{I}(z_t<-\tau_z)]$;\;
           $p_{\mathrm{res}}\leftarrow \mathbb{E}[\mathbb{I}(|z_t|\le \tau_z)]$;\;
           $p_{\mathrm{pos}}\leftarrow \mathbb{E}[\mathbb{I}(z_t>\tau_z)]$.
    \STATE $a\leftarrow \mathrm{clamp}(p_{\mathrm{res}}-\lambda_{\mathrm{pos}}p_{\mathrm{pos}},\,0,\,1)$.

    \STATE {\bfseries (3) Projection-wise LoRA step scaling}
    \FOR{$\star\in\{\mathrm{up},\mathrm{gate},\mathrm{down}\}$}
      \STATE $s_\star^{\mathrm{new}}\leftarrow \mathrm{clip}(1+\alpha_\star(2a-1),\ s_{\min}^\star,\ s_{\max}^\star)$.
      \STATE $s_{\star,t}\leftarrow \beta_s s_{\star,t-1}+(1-\beta_s)s_\star^{\mathrm{new}}$.
      \STATE Apply LoRA-only scaling: $\nabla\theta_\star \leftarrow s_{\star,t}\nabla\theta_\star$.
    \ENDFOR

    \STATE {\bfseries (4) Mask application}
    \STATE Apply $m_t$ along the output-channel dimension of gate-related LoRA gradients.

    \STATE {\bfseries Output:} masked and step-scaled LoRA gradients.
  \end{algorithmic}
\end{algorithm}

This section supplements Algorithm~\ref{alg:nalora} with specific implementation details for reproducibility.

\subsection{Hook Locations and Gradient Modification}
NA-LoRA is implemented using PyTorch forward/backward hooks and does not require a custom optimizer.

\paragraph{1. Mask Application (Micro).}
The mask $m_t$ (Eq.~\ref{eq:mask_ema}) is calculated during the forward pass of the FFN block.
It is applied to the \textbf{gradient of the LoRA $B$ matrix} of the \texttt{gate\_proj} module.
Let the LoRA adapter be $W + BA$.
\[
\text{Hook on } \nabla B_{\mathrm{gate}}: \quad (\nabla B_{\mathrm{gate}})_{c, :} \leftarrow m_{t,c}\, (\nabla B_{\mathrm{gate}})_{c, :}.
\]
Here we use the convention $B\in\mathbb{R}^{d_h\times r}$ and $A\in\mathbb{R}^{r\times d}$ for the gate projection.  If an implementation stores LoRA factors in transposed layout, the same channel-wise mask should be applied along the output-channel dimension.
This effectively reallocates capacity away from low-responsiveness channels without modifying the pretrained weights.

\paragraph{2. Step Scaling (Macro).}
The scalars $s_{\mathrm{up}}, s_{\mathrm{gate}}, s_{\mathrm{down}}$ are computed based on the regime statistics of the \emph{same forward pass}.
They are applied to the gradients of \textbf{both $A$ and $B$ matrices} of the corresponding LoRA modules.
\[
\nabla A_{\star} \leftarrow s_{\star} \cdot \nabla A_{\star}, \quad \nabla B_{\star} \leftarrow s_{\star} \cdot \nabla B_{\star}.
\]

\subsection{Hyperparameters}
\label{app:hyperparams}

We report the default hyperparameters used in the Llama experiments, which correspond directly to the \texttt{NALoRAController} implementation.  The parameters are grouped into (i) the \textbf{Micro-level Mask}, which governs channel-selection sparsity and temporal smoothness, and (ii) the \textbf{Macro-level Step Scaling}, which governs regime-based gradient modulation.

Notable settings include a high sharpness ($\beta=20$) for the selection mask to approximate a binary gate, and distinct scaling bounds ($s_{\min}, s_{\max}$) for the \texttt{gate}, \texttt{up}, and \texttt{down} projections to respect their different roles in the FFN mechanism.  Table~\ref{tab:hyperparams} lists the default values used for Llama experiments.

\begin{table}[H]
    \centering
    \caption{Default NA-LoRA hyperparameters for Llama experiments.}
    \label{tab:hyperparams}
    \begin{small}
    \begin{tabular}{llcc}
        \toprule
        \textbf{Component} & \textbf{Parameter / Symbol} & \textbf{Code Arg} & \textbf{Value} \\
        \midrule
        \multicolumn{4}{l}{\textit{\textbf{Micro: Temporal-Importance Mask}}} \\
        \midrule
        \multirow{3}{*}{Selection}
        & Keep Ratio ($\rkeep$) & \texttt{gate\_keep\_ratio} & $0.30$ \\
        & Sharpness ($\beta$) & \texttt{gate\_beta} & $20.0$ \\
        & Mask EMA ($\eta_m$) & \texttt{gate\_ema} & $0.90$ \\
        \midrule
        \multicolumn{4}{l}{\textit{\textbf{Macro: Homogeneity-Dynamics Step Scaling}}} \\
        \midrule
        \multirow{3}{*}{Regime Stats}
        & Regime Threshold ($\tau_z$) & \texttt{t} & $1.27846^\dagger$ \\
        & Positive-region Penalty ($\lambda_{\mathrm{pos}}$) & \texttt{lambda\_ov} & $1.0$ \\
        & Scale EMA ($\beta_s$) & \texttt{beta\_s} & $0.95$ \\
        \midrule
        \multicolumn{4}{l}{\textit{\textbf{Projection-Specific Sensitivity \& Bounds}}} \\
        \midrule
        \multirow{2}{*}{\texttt{gate\_proj}}
        & Sensitivity ($\alpha_{\text{gate}}$) & \texttt{alpha\_gate} & $0.40$ \\
        & Range $[s_{\min}, s_{\max}]$ & \texttt{smin/max\_gate} & $[0.80, 1.50]$ \\
        \midrule
        \multirow{2}{*}{\texttt{up\_proj}}
        & Sensitivity ($\alpha_{\text{up}}$) & \texttt{alpha\_up} & $0.30$ \\
        & Range $[s_{\min}, s_{\max}]$ & \texttt{smin/max\_up} & $[0.80, 1.40]$ \\
        \midrule
        \multirow{2}{*}{\texttt{down\_proj}}
        & Sensitivity ($\alpha_{\text{down}}$) & \texttt{alpha\_down} & $0.20$ \\
        & Range $[s_{\min}, s_{\max}]$ & \texttt{smin/max\_down} & $[0.85, 1.30]$ \\
        \bottomrule
    \end{tabular}
    \end{small}
    \vspace{1mm}
    \begin{flushleft}
    \footnotesize{$^\dagger$ The threshold is specific to SwiGLU and is derived from the characteristic points of $\kappa_\phi(z)=z\phi'(z)/(\phi(z)+\varepsilon)$; see Appendix~\ref{app:local_homogeneity}.  For ReLU-FFN T5 and GELU-based CLIP-ViT, this coarse statistic degenerates and step scaling is disabled.  The code argument remains \texttt{t}, while the paper uses $\tau_z$ to avoid confusion with the optimization step index.}
    \end{flushleft}
\end{table}

\subsection{Computational Overhead}
NA-LoRA introduces negligible overhead.
\begin{itemize}
    \item \textbf{Memory:} Requires storing 3 scalars ($s_\star$) and 1 vector ($m_t$) per layer. No Hessian or second-order states.
    \item \textbf{Compute:} Calculation of $k_{\mathrm{eff}}$ and, when enabled, regime stats requires one pass over the gate activation tensor $z$, which is $\mathcal{O}(B \cdot S \cdot d_h)$. This cost is small relative to the matrix multiplications in the FFN ($d_h \times d$).
\end{itemize}

\section*{LLM Usage}
ChatGPT was used to aid in polishing the writing. Specifically, it was employed to correct grammar, improve readability, and refine the clarity of the prose.

\end{document}